\documentclass[sigconf]{acmart}

\usepackage{booktabs}
\usepackage{microtype}
\usepackage{graphicx}
\usepackage{subfigure}

\usepackage{hyperref}



\usepackage{amsmath,amsfonts,bm}









\def\eqref#1{equation~\ref{#1}}









\def\1{\bm{1}}










\def\mS{{\bm{S}}}

\def\mX{{\bm{X}}}

\DeclareMathAlphabet{\mathsfit}{\encodingdefault}{\sfdefault}{m}{sl}
\SetMathAlphabet{\mathsfit}{bold}{\encodingdefault}{\sfdefault}{bx}{n}
\newcommand{\tens}[1]{\bm{\mathsfit{#1}}}

\def\tM{{\tens{M}}}

\def\tS{{\tens{S}}}

\def\tV{{\tens{V}}}













\usepackage{amsfonts} 
\usepackage{amsmath}
\usepackage{amsthm}
\usepackage{multicol}
\usepackage{multirow}
\usepackage{tabulary}
\usepackage{enumitem}
\usepackage{balance}
\usepackage{sidecap}
\usepackage{url}
\usepackage{algorithm}
\usepackage{algorithmic}

\usepackage{natbib}
\bibliographystyle{ACM-Reference-Format}
\citestyle{acmauthoryear}

\newcommand{\pluseq}{\mathrel{+}=}
\newcommand{\norm}[1]{\left\lVert#1\right\rVert}
\newcommand{\abs}[1]{\left|#1\right|}
\newtheorem{theorem}{Theorem}[section]
\newtheorem{lemma}{Lemma}[section]

\setcopyright{none}

\acmDOI{}

\acmISBN{}

\acmYear{2019}
\copyrightyear{2019}

\begin{document}
\title{Compressing Gradient Optimizers via Count-Sketches}

\author{Ryan Spring}
\affiliation{%
  \institution{Rice University}
  \institution{Department of Computer Science}
  \city{Houston} 
  \state{TX}
  \country{USA}
}
\email{rdspring1@rice.edu}

\author{Anastasios Kyrillidis}
\affiliation{%
  \institution{Rice University}
  \institution{Department of Computer Science}
  \city{Houston} 
  \state{TX}
  \country{USA}
}
\email{anastasios@rice.edu}

\author{Vijai Mohan}
\affiliation{%
  \institution{Amazon Search}
  \city{Palo Alto} 
  \state{CA}
  \country{USA}
}
\email{vijaim@amazon.com}

\author{Anshumali Shrivastava}
\affiliation{%
  \institution{Rice University}
  \institution{Department of Computer Science}
  \city{Houston} 
  \state{TX}
  \country{USA}
}
\email{anshumali@rice.edu}

\begin{abstract}
Many popular first-order optimization methods (e.g., Momentum, AdaGrad, Adam) accelerate the convergence rate of deep learning models. However, these algorithms require auxiliary parameters, which cost additional memory proportional to the number of parameters in the model. The problem is becoming more severe as deep learning models continue to grow larger in order to learn from complex, large-scale datasets. Our proposed solution is to maintain a linear sketch to compress the auxiliary variables. We demonstrate that our technique has the same performance as the full-sized baseline, while using significantly less space for the auxiliary variables. Theoretically, we prove that count-sketch optimization maintains the SGD convergence rate, while gracefully reducing memory usage for large-models. On the large-scale 1-Billion Word dataset, we save 25\% of the memory used during training (8.6 GB instead of 11.7 GB) by compressing the Adam optimizer in the Embedding and Softmax layers with negligible accuracy and performance loss. For an Amazon extreme classification task with over 49.5 million classes, we also reduce the training time by 38\%, by increasing the mini-batch size 3.5$\times$ using our count-sketch optimizer.
\end{abstract}

\keywords{Count-Sketch; Count-Min Sketch; Non-Convex Optimization; Adam; Adagrad; Momentum; Language Models; Softmax Classifier; Deep Learning}

\maketitle

\section{Introduction} 
An emerging trend in natural language processing is to train a language model in an unsupervised fashion on a large corpus of text, and then to fine-tune the model for a specific task \citep{radford2018improving, puri2018largescale, devlin2018bert}. The language model often takes the form of an LSTM \citep{jozefowicz2016limits} or a Transformer network \citep{attentionNIPS2017}. 

These models already contain millions of parameters and will continue to grow even larger. Recently, \cite{yang2018breaking} demonstrated that the expressiveness of a single Softmax layer was insufficient for the language modeling task. Their proposed solution was the Mixture of Softmax (MoS) layer, which combines several independent Softmax layers together. The number of Softmax layers typically ranges between 3 and 15, so the proposed solution requires significantly more space, especially for larger vocabularies.


Training large-scale models efficiently is a challenging task. There are numerous publications that describe how to leverage multi-GPU data parallelism and mixed precision training effectively \citep{largebatchNIPS2017, ott2018scaling, micikevicius2018mixed}. A key tool for improving training time is to increase the batch size, taking advantage of the massive parallelism provided by GPUs. However, increasing the batch size also requires significant amounts of memory. Often times, a practitioner will sacrifice their batch size for a larger, more expressive model. For example, \citep{puri2018largescale} showed that doubling the dimensionality of an multiplicative LSTM \citep{krause2016mlstm} from 4096 to 8192 forces them to reduce the batch size per GPU by $4\times$.

One culprit that aggravates the memory capacity issue is the auxiliary parameters used by first-order optimization algorithms, which are commonly used to accelerate the convergence rate of the model. Our proposed solution is to compress the auxiliary parameters of the optimizer using the Count-Sketch dataset structure \cite{charikar2002finding}, freeing up memory for either a more expressive model or a larger batch size for faster training.

We primarily focus on compressing the auxiliary variables for the embedding and Softmax layers. These layers contain a significant portion of the model's parameters and the set of active features or classes is extremely sparse for many tasks~\cite{spring2017lsh}. Consider the language modeling task where there are only a few words out of a large vocabulary in each sentence. There are several algorithms that impose sparsity on the Softmax layer to improve training time. However, getting around memory is still a major challenge. Since the distribution of words follows a power-law distribution, Sampled Softmax \citep{jean2014sampled} is commonly used to training language models. \cite{shrivastava2014asymmetric, vijayanarasimhan2014deep, lossICML2018} have proposed using approximate nearest-neighbor search to find the output classes that contain the highest gradients.

Our solution takes advantage of the sparsity present in the Embedding and Softmax layers, so the computational cost scales with the gradient sparsity. We directly insert the sparse gradients into the count-sketch, and then retrieve an approximation of the auxiliary variable. Furthermore, we can easily trade-off the capacity of the count-sketch to maintain the optimizer's performance, without increasing the cost of updating or querying the structure. In Section \ref{sec:theory}, we formally prove this graceful memory trade-off, by analyzing the convergence rate of our count-sketch optimizer.

On the 1-Billion Word dataset, we train an LSTM language model using the Adam optimizer, leveraging our count-sketch technique. By compressing the auxiliary variables for the Embedding and Softmax layers, we reduce the memory usage during training by \textbf{25$\%$} without any accuracy or performance penalty. For an Amazon extreme classification task with over \textbf{49.5} million classes, we reduce the training time by \textbf{38\%} by increasing the mini-batch size \textbf{3.5$\times$} using our count-sketch optimizer.



\section{Count-Sketch and Streaming Setting}
In the traditional streaming setting, we are given a high-dimensional vector ${\boldsymbol x} \in \mathbb{R}^d$ that is too costly to store in memory. We only see a very long sequence of updates over time. The only information available at time $t$ is of the form $(i, \Delta)$, which means that coordinate $i$ is updated by the amount $\Delta$. We are given a limited amount of storage, on the order of $\mathcal{O}(\log{d})$, which means that we can never store the entire vector. Sketching algorithms aim to estimate the value of current item $i$, after any number of updates using only $\mathcal{O}(\log{d})$ memory. 

The Count-Sketch is a popular algorithm for estimation in the streaming setting. Count-Sketch keeps a matrix of bins $\displaystyle \mS$ of size $v \times w \sim \mathcal{O}(\log{d})$, where $v$ and $w$ are chosen based on the desired accuracy guarantees. The algorithm uses $v$ random hash functions $h_j$ for $j \in \{1,\ 2,...,\ v\}$ to map the vector's components to $w$ different bins, $h_j:\{1,\ 2,..., \ d\} \rightarrow \{1, \ 2,...,\ w\}$. In particular, for any row $j$ of sketch $\displaystyle \mS$, component $i$ is hashed into bin $\displaystyle \mS_{j, h_j(i)}$. In addition, Count-Sketch uses $v$ random sign functions $s_j$ to map the components of the vectors randomly to $\{+1,\ -1\}$, $s_j:\{1,\ 2,..., \ d\} \rightarrow \{+1,-1\}$.

The Count-Sketch supports two operations: UPDATE(item $i$, increment $\Delta$) and QUERY(item $i$). The UPDATE operation updates the sketch with any observed increment. More formally, for an increment $\Delta$ to an item $i$, the sketch is updated by adding $s_j(i)\cdot\Delta$ to the cell $\mathcal{S}_{j,h_j(i)}, \forall  j \in \{1,\ 2,...,\ v\}$. The QUERY operation returns an estimate for component $i$, the median of all the $w$ different associated counters. If the updates are strictly non-negative, we return the minimum value across all the counters.

{\bf Count-Sketch Error:} 
\citep{charikar2002finding} Let $\hat{x}_i$ be the Count-Sketch estimate of component $i$ from vector $x$. For any component $x_i$, with probability $1-\delta$, a Count-Min Sketch matrix with width $\Theta(\frac{1}{\epsilon^2_1})$ and depth $\Theta(\log(\frac{d}{\delta}))$ satisfies:
$$x_i - \epsilon_1 \norm{x}_2 \le \hat{x}_i \le x_i + \epsilon_1 \norm{x}_2$$ 

{\bf Count-Min Sketch Error:} 
\citep{cormode2005improved} Let $\hat{x}_i$ be the Count-Min Sketch estimate of component $i$ from vector $x$. For any component $x_i$, with probability $1-\delta$, a Count-Min Sketch matrix with width $\Theta(\frac{1}{\epsilon_1})$ and depth $\Theta(\log(\frac{d}{\delta}))$ satisfies:
$$x_i \le \hat{x}_i \le x_i + \epsilon_1 \norm{x}_1$$
 
\begin{algorithm}[ht]
\caption{Count-Sketch Tensor}
\label{alg:CS}
\begin{algorithmic}
\STATE $v$ universal hash functions $h_j$
\STATE $v$ random sign functions $s_j$
\STATE  Initialize count-sketch tensor $\displaystyle \tS \in \mathbb{R}^{v,w,d}=0$

\hrulefill

\STATE  {\bf UPDATE(Count-Sketch $\displaystyle \tS$, item i, update $\Delta \in \mathbb{R}^d$)}: 
\STATE Update component $i$ with update $\Delta$
\FOR{$j=1$ {\bfseries to} $v$}
\STATE $\displaystyle \tS_{j,h_j(i),:} \gets  \displaystyle \tS_{j,h_j(i),:} + s_j(i) \cdot \Delta$
\ENDFOR

\hrulefill

\STATE {\bf QUERY(Count-Sketch $\displaystyle \tS$, item i, Function $F$):} 
\STATE Query sketch for an estimate for item $i$
\STATE $F$ - MIN for non-negative values; otherwise MEDIAN
\STATE $F_{j \in\{1,2,...,v\}} (s_j(i) \cdot \displaystyle \tS_{j,h_j(i),:})$
\end{algorithmic}
\end{algorithm}

\section{Intuition}
Our goal is to compress the auxiliary variables without incurring significant accuracy loss. Unfortunately, selecting the appropriate compression scheme is not clear without any additional information on the parameter distribution. The challenge is that the parameter distribution can change over time, so any static assumption on the approximation is likely to hurt accuracy. Fortunately, in this section we show that there is a potential solution. 

\textbf{Power Law in Auxiliary Variables over Time:} In Figure~\ref{fig:pl}, we plot the auxiliary variables sorted according to their absolute values during training. To understand the dynamics over time, we show the parameters at two different epochs 5 and 40. The plots clearly indicate a power law behavior where only a few parameters have large magnitudes. In Figure~\ref{fig:pl_50thd}, we confirm this behavior for every iteration by plotting the midpoint dividing the head and tails. The auxiliary variables have long tails throughout the training process. Also, this behavior is invariant across the two datasets - (Wikitext-2 and Image-Net). To the best of our knowledge, this is the first work that empirically shows the existence of a power law distribution behavior in the gradients and auxiliary variables while training. To dig deeper, we also show the identities of top-100 parameters (the head of power law distribution) for epochs 5, 20, and 40 in Figure~\ref{fig:pl}. The top identities change over time, which makes it difficult to cluster parameters into predefined, static clusters. 

\textbf{Power law and linear sequence of updates:} In summary, we need to compress a power law distribution where the top-k identities are constantly changing. Fortunately, the auxiliary variables are updated in a linear fashion. The updates can be written as a linear operator over updates (See Section~\ref{sec:algorithm}). The count-sketch is a dynamic, low-memory data structure, which preserves high magnitude parameters accurately, while allowing for any sequence of linear updates. The linearity of updates allows us to guarantee that the count-sketch provides an accurate estimation of parameters with high probability at every stage in the iteration. The power law distribution and linear updates make sketching-based ideas a perfect fit for this problem. 

\begin{figure*} [ht]
\begin{center}
\mbox{
\includegraphics[width=0.33\textwidth]{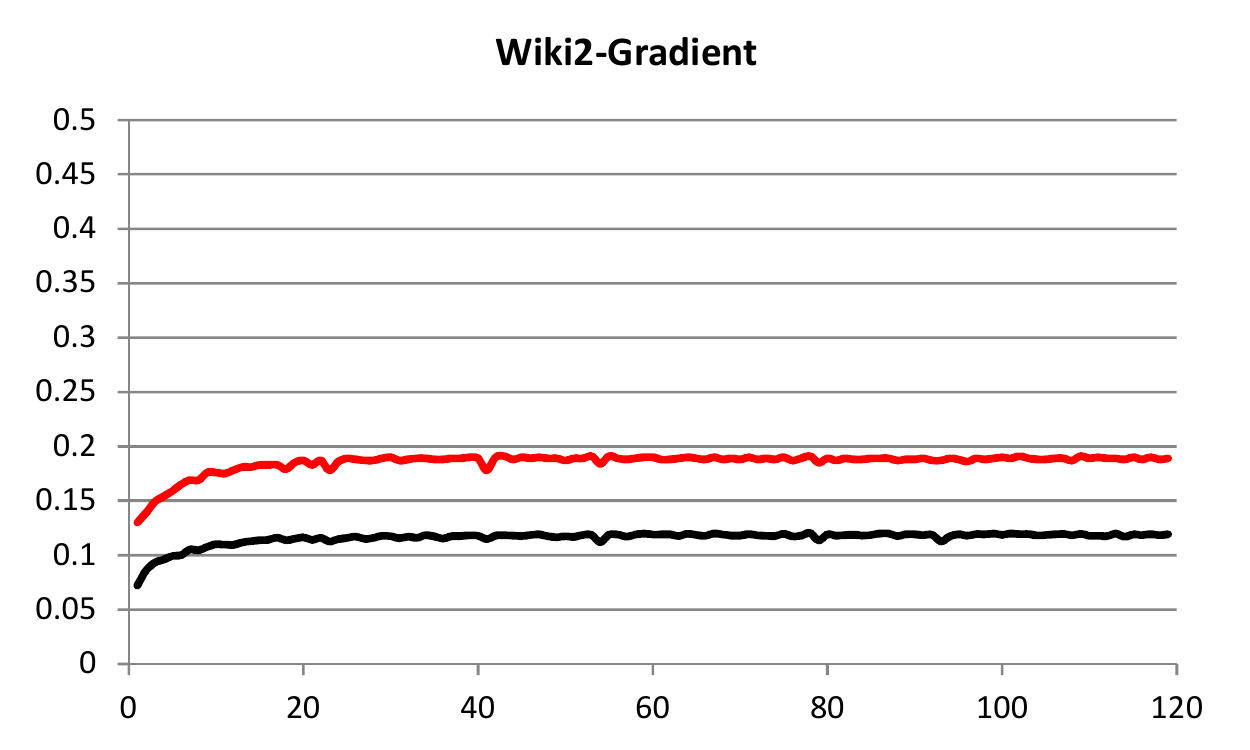}
\includegraphics[width=0.33\textwidth]{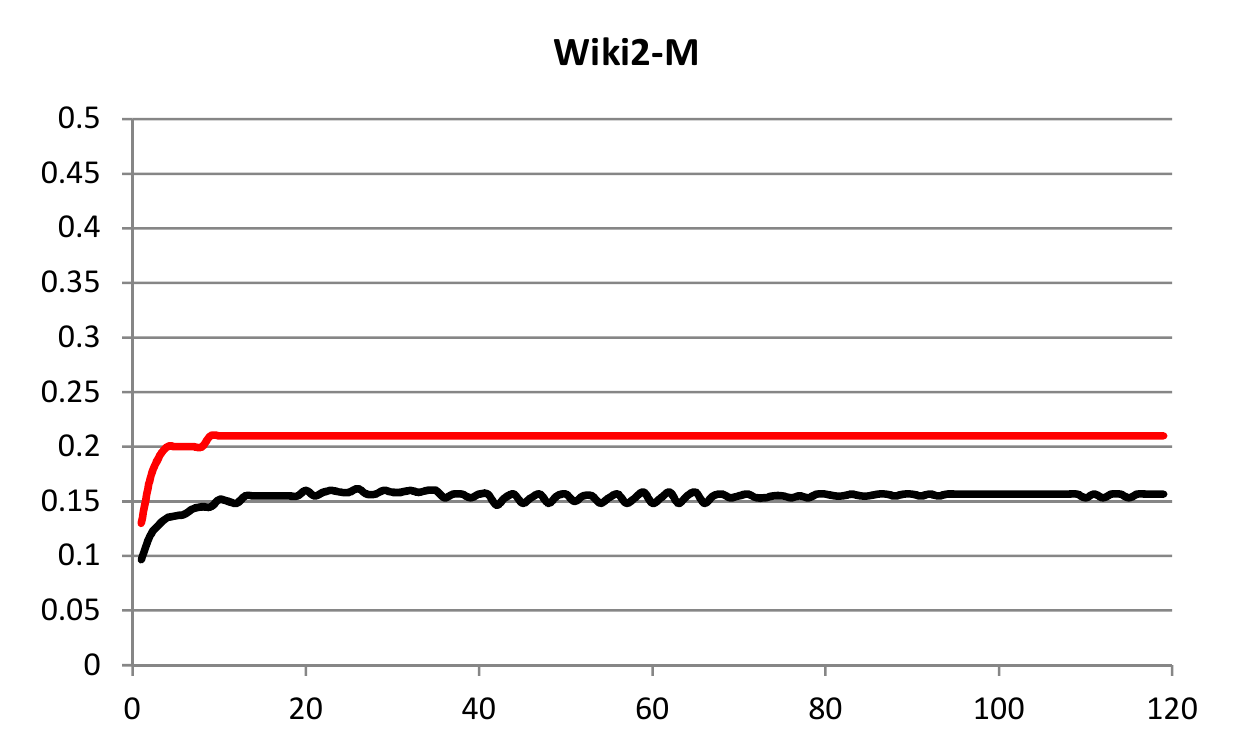}
\includegraphics[width=0.33\textwidth]{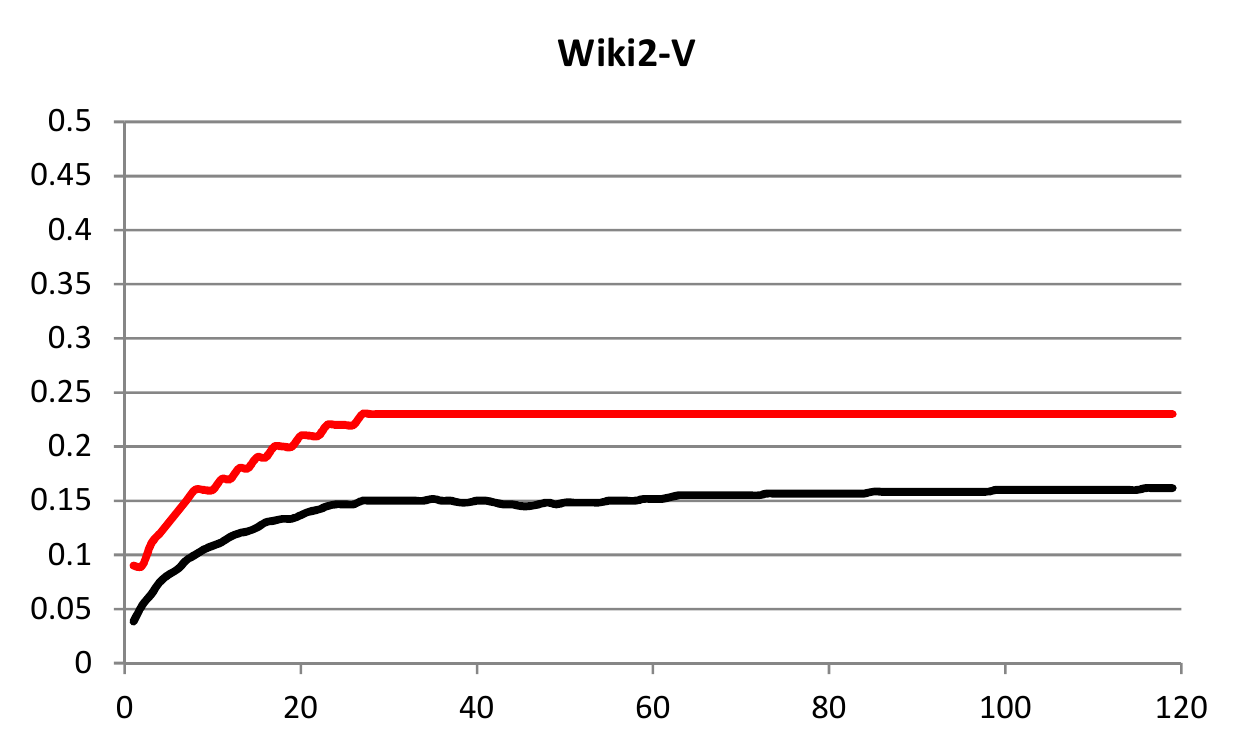}
}
\\
\mbox{
\includegraphics[width=0.33\textwidth]{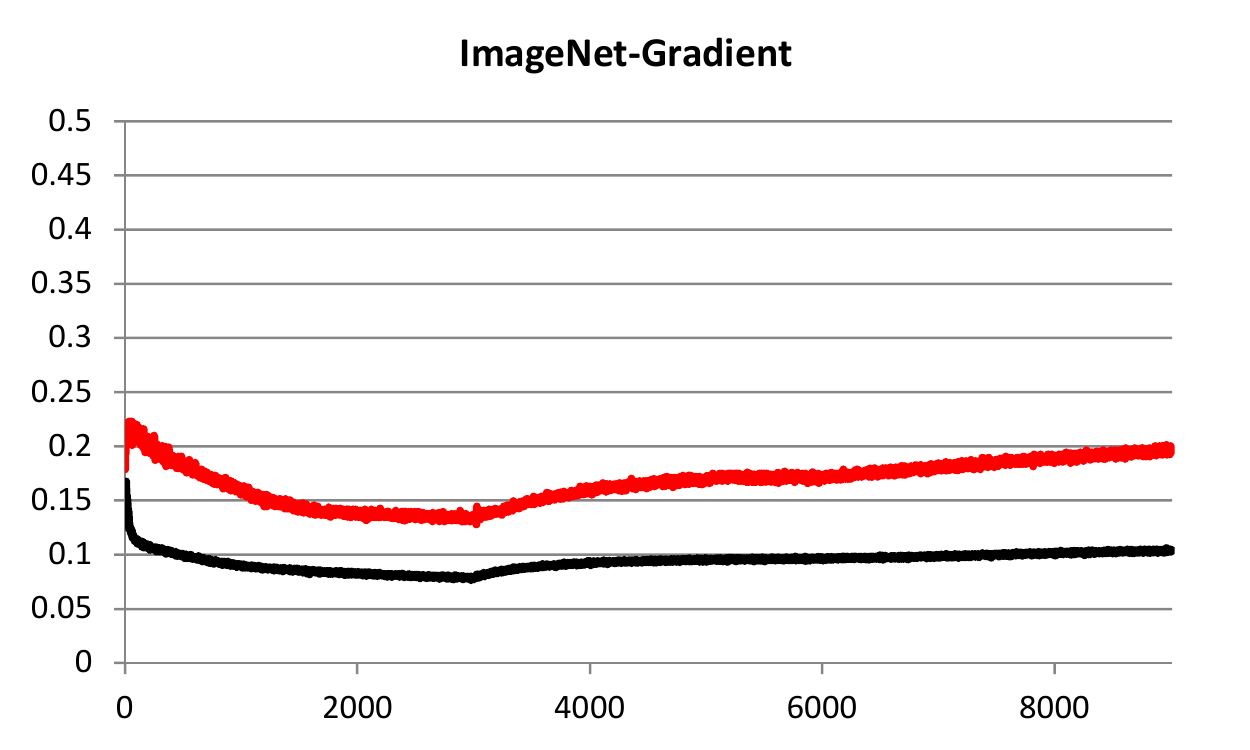}
\includegraphics[width=0.33\textwidth]{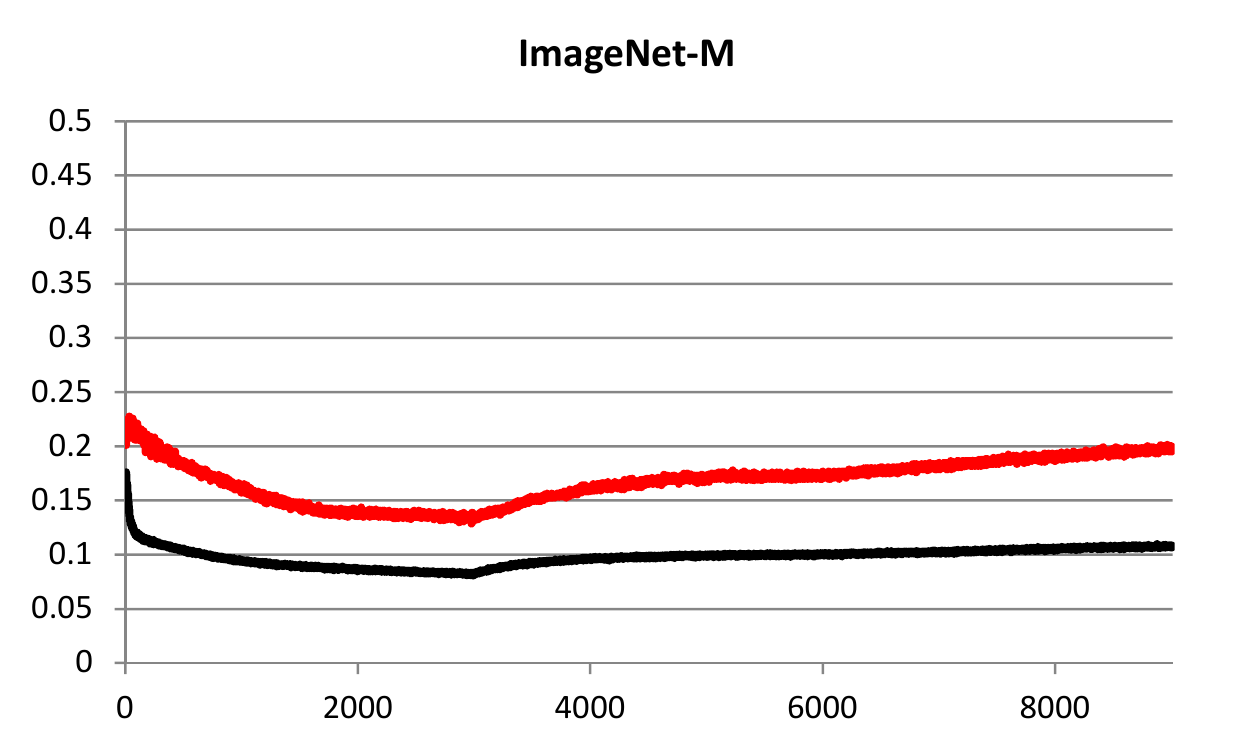}
\includegraphics[width=0.33\textwidth]{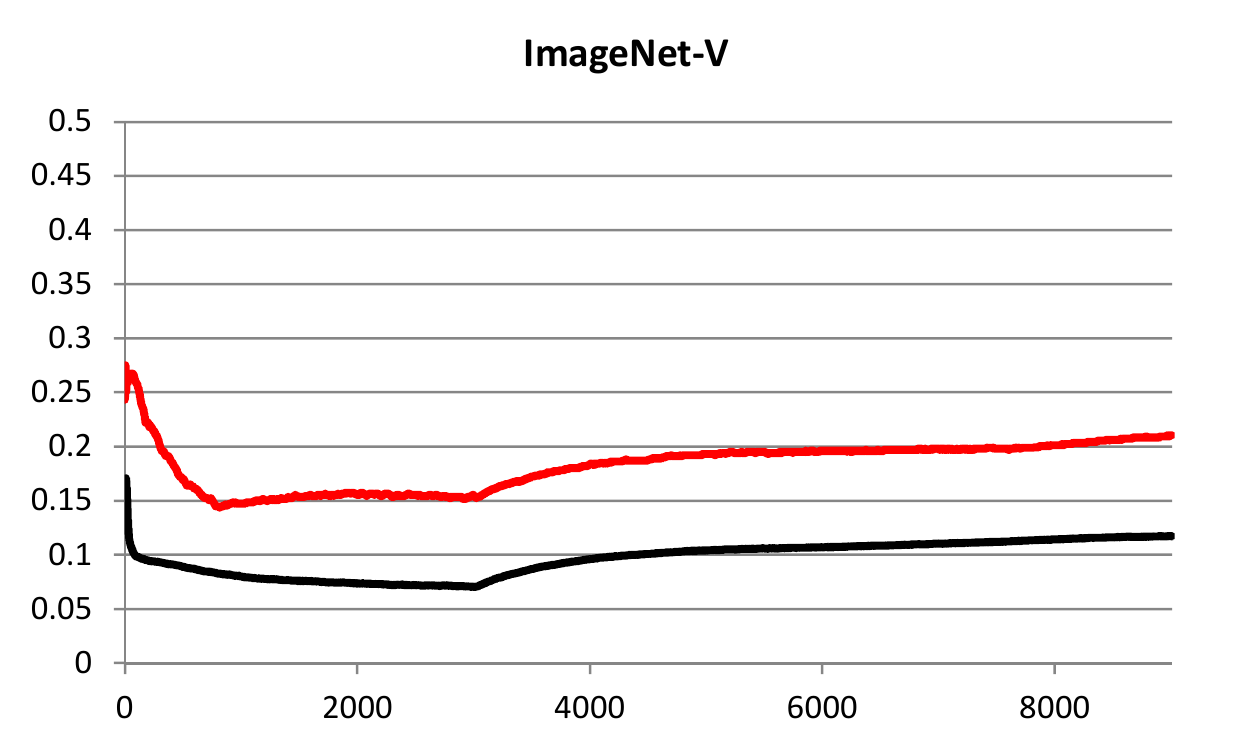}
}
\end{center}
\caption{An empirical demonstration showing that the model's gradients and the optimizer's auxiliary variables follow a power-law distribution. The count-sketch data structure approximates the heavy hitter entries with greater accuracy. Therefore, this experiment implies that the count-sketch data structure is appropriate for compressing the auxiliary variables. The X-axis is the number of iterations during training time. The Y-axis is the 50\% threshold that marks the midpoint dividing the head and the tail of the distribution. For a uniform distribution, the midpoint is at 0.5. However, the 50\% threshold for the gradients and auxiliary variables is less than 0.2 on average, indicating that they follow a power law distribution. The red line marks the maximum threshold for all layers, while the black line represents the average threshold.}
\label{fig:pl_50thd}
\end{figure*}

\begin{figure*} [ht]
\begin{center}
\mbox{
\includegraphics[width=0.2425\textwidth]{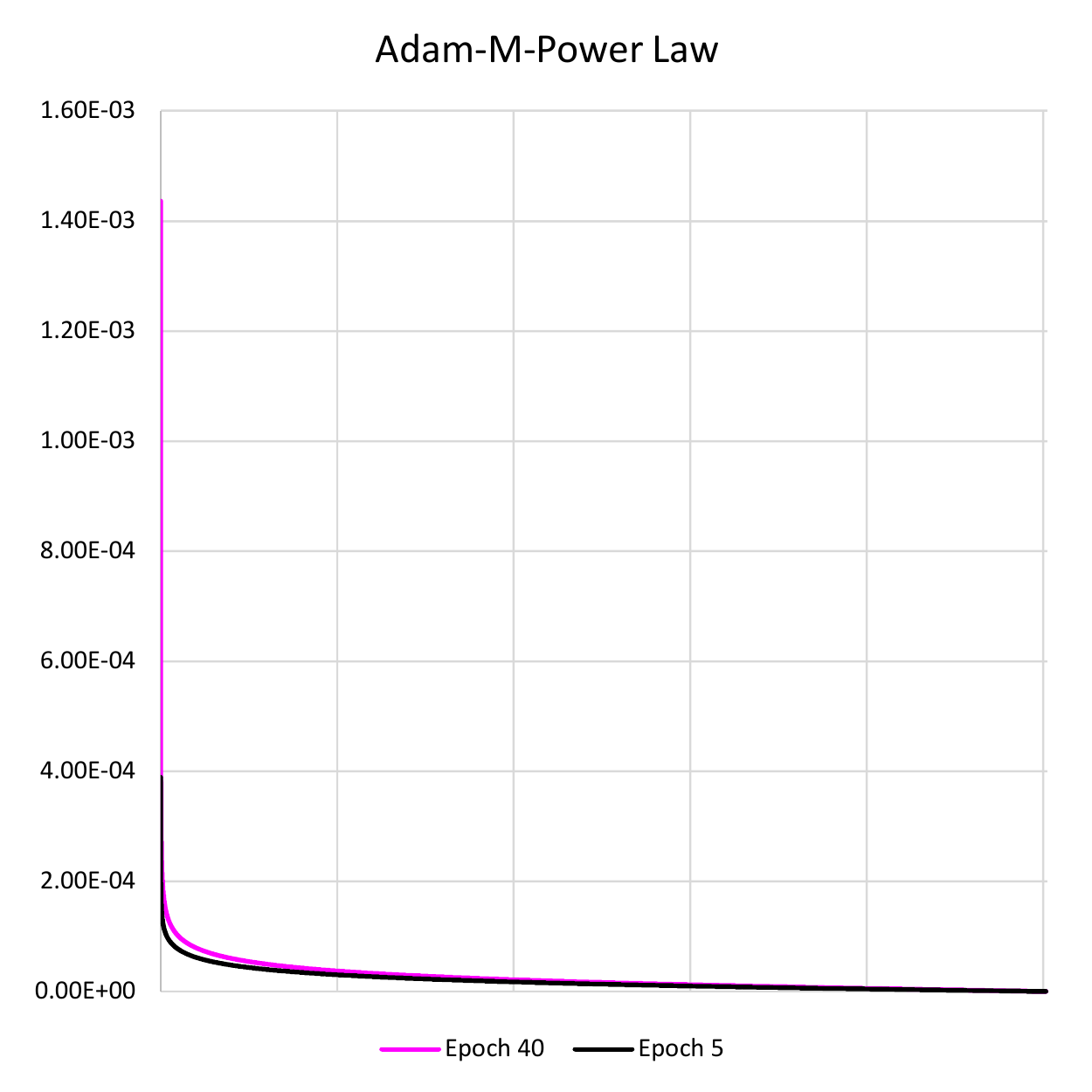}
\includegraphics[width=0.2425\textwidth]{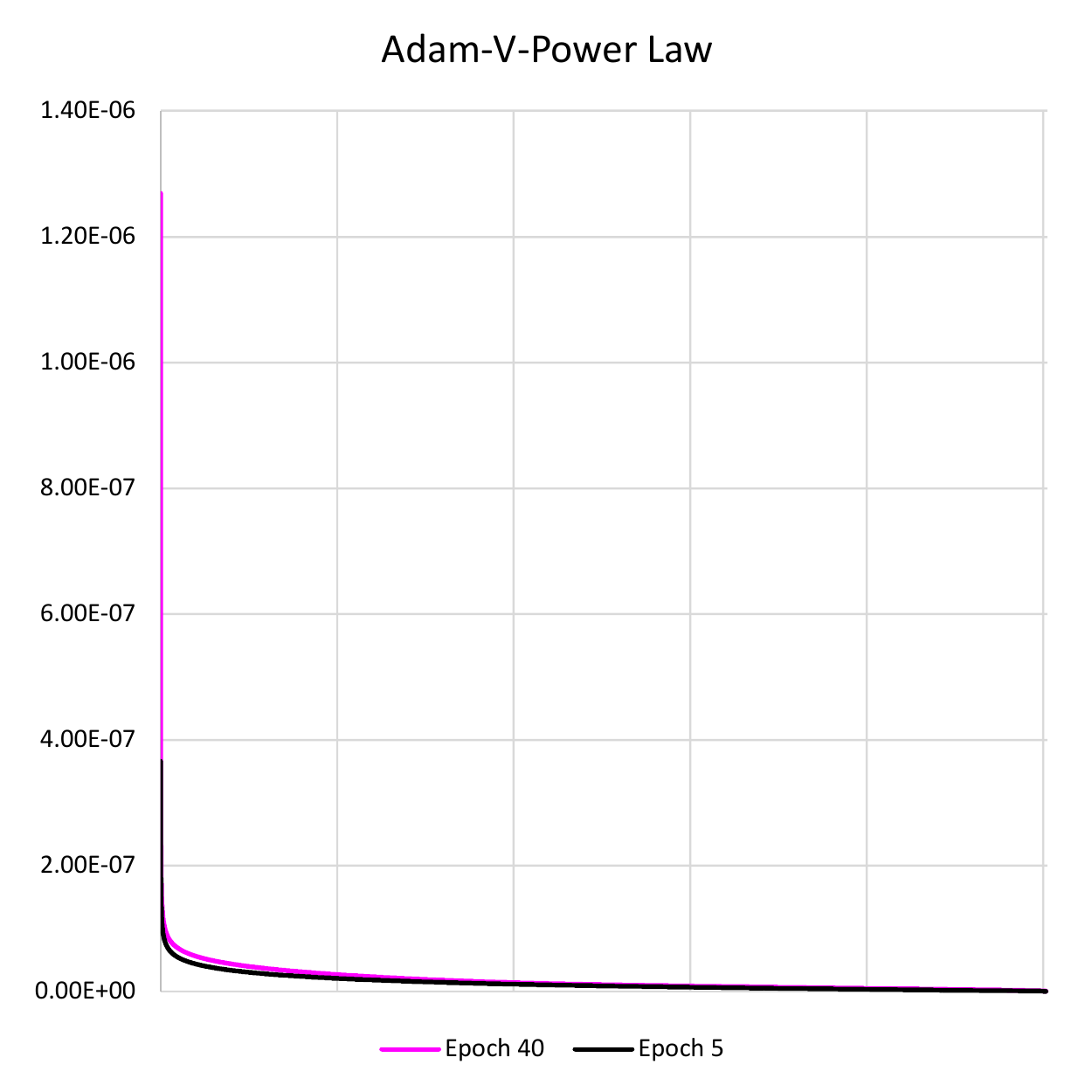}
\includegraphics[width=0.2425\textwidth]{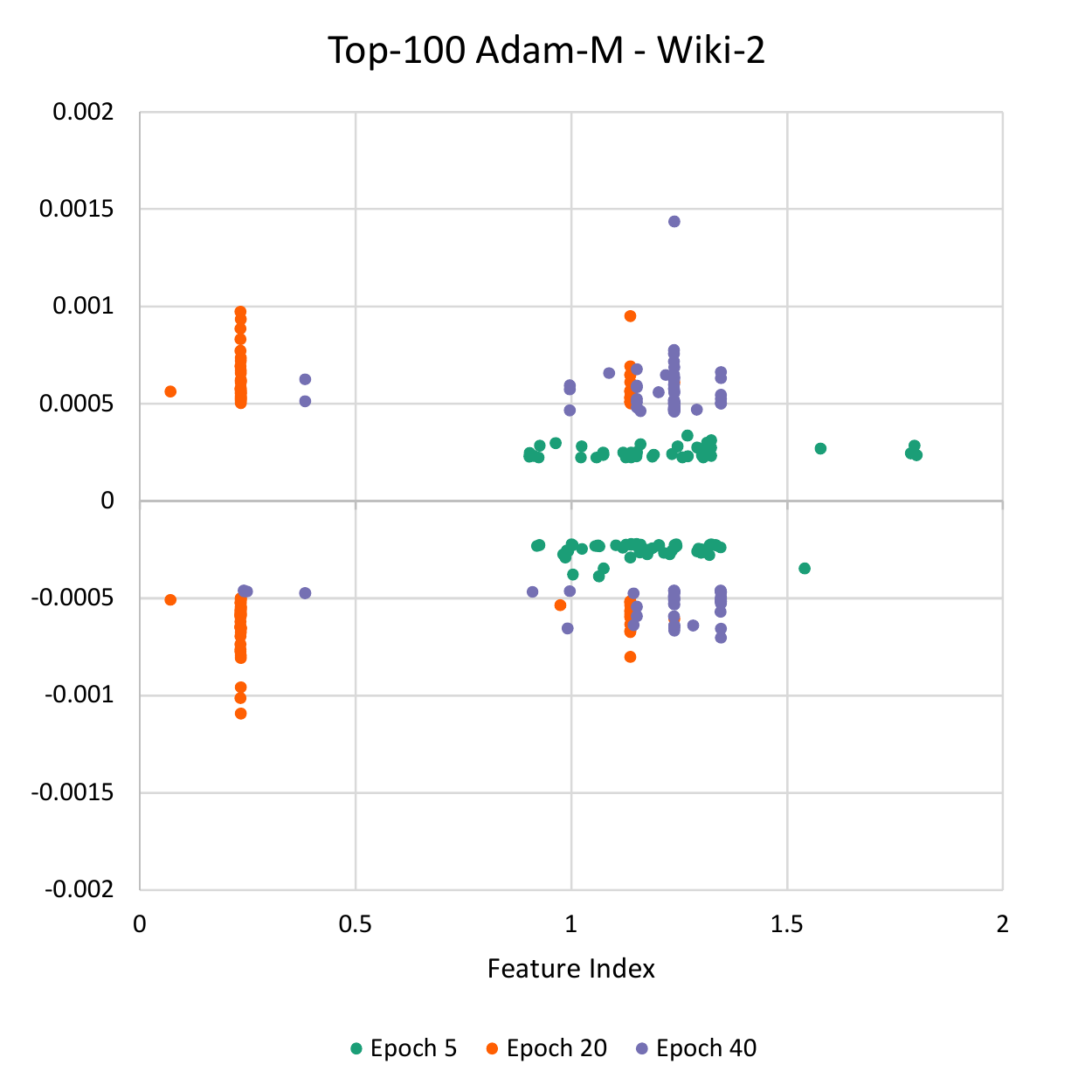}
\includegraphics[width=0.2425\textwidth]{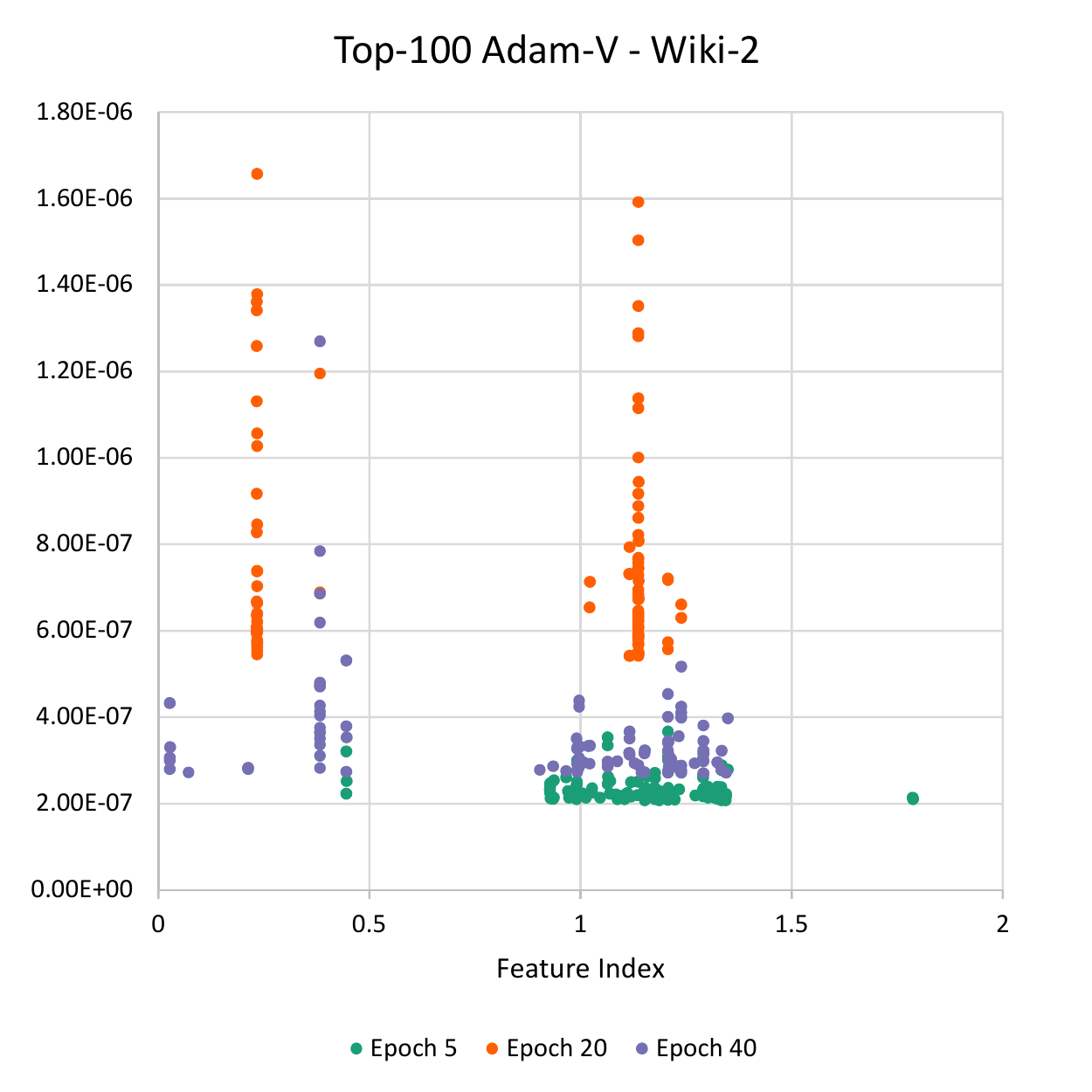}
}
\end{center}
\caption{The optimizer's auxiliary variables follow a power-law distribution, but the features associated with top-k values change during training. The X-Axis is the feature ID, while the Y-Axis is the magnitude. The first two charts show the sorted absolute values for the auxiliary variables at different training epochs. The last two charts plot the top 100 features and their magnitudes. We plot the 1st and 2nd moments of the Adam Optimizer for an LSTM weight matrix trained on the Wiki2 dataset.}
\label{fig:pl}
\end{figure*}

\section{Count-Sketch Optimizers}
\label{sec:algorithm}
A major chunk of the parameters in the deep network are contained in the fully-connected layers \citep{han2015deep}. Fortunately, for the embedding and softmax layers, the set of active features or classes and their corresponding gradient updates are sparse. Our insight is to use the count-sketch data structure to accurately represent the auxiliary variables in a compressed manner. We will insert the sparse gradient information into the count-sketch and retrieve an approximate value for the auxiliary variable whenever needed.

In the deep learning setting, the high-dimensional vector ${\boldsymbol \beta}$ is analogous to the matrices used to represent the auxiliary variables. The auxiliary variables are represented with $\mathbb{R}^{n,d}$ matrices where $n$ is the number of features in the embedding layer or the number of classes in the softmax layer. Since the dimensionality of the columns $v$ is usually in the low thousands ($<10K$), we represent the auxiliary variables with a count-sketch tensor $\mathbb{R}^{v,w,d}$ where $v \cdot w \ll n$. This count-sketch tensor preserves structured sparsity where values are read from memory in contiguous chunks along the last dimension of the tensor. See Fig.~\ref{fig:cs_tensor} for a visualization. This tensor structure maintains high performance with GPUs and CPU SIMD vector instructions. On the other hand, the $n$ rows are compressed by randomly combining features and classes together.

\begin{figure}
\begin{center}
\includegraphics[width=0.4\textwidth]{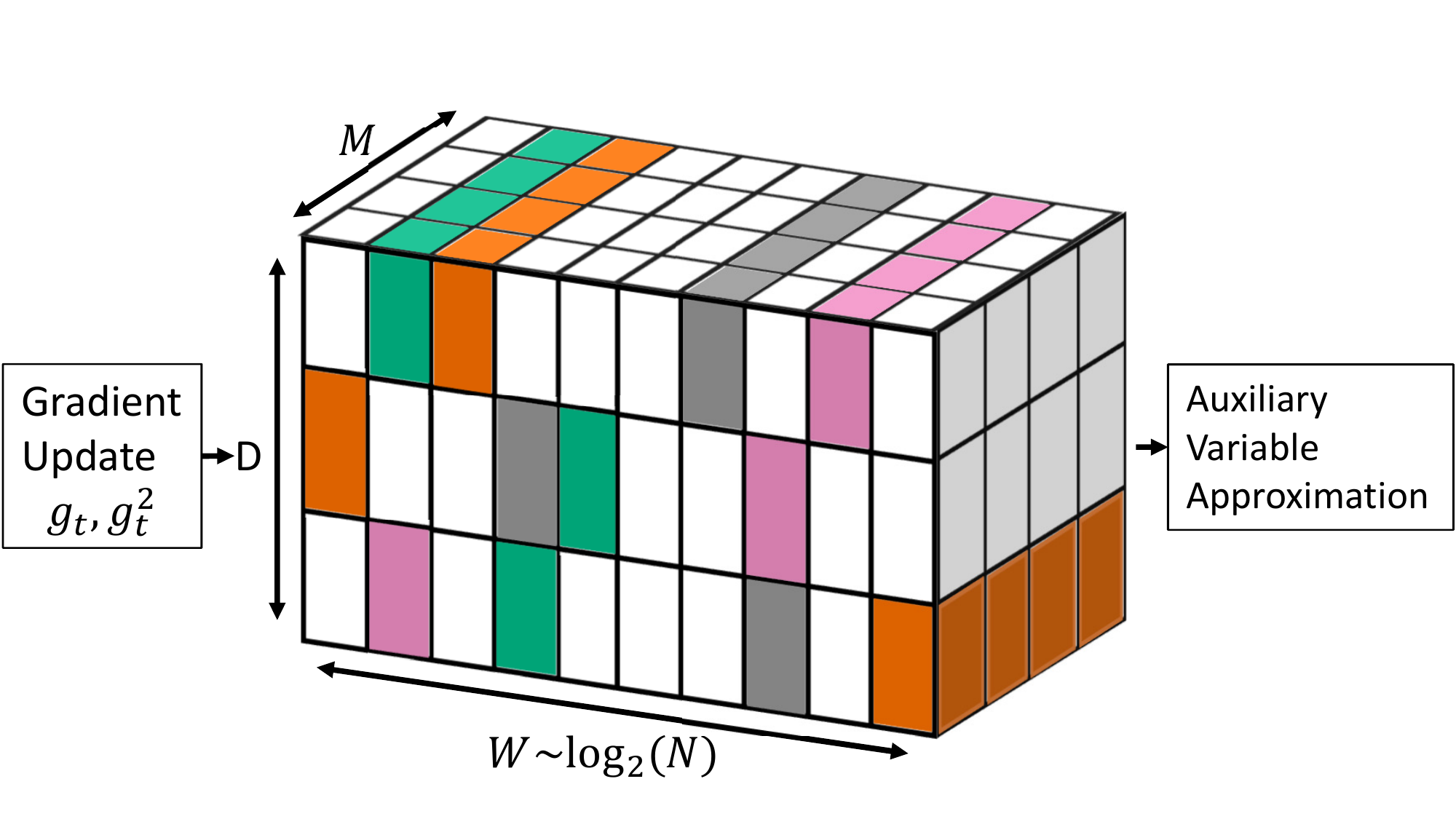}
\end{center}
\caption{Visualization of Count Sketch Tensor. Each color represents a unique feature. For each row, each feature is mapped randomly to a different vector. Each vector is read from and written to memory in contiguous chunks. Preserving the last dimension of the auxiliary variable keeps structure sparsity in the count-sketch\ data structure, which is necessary for high performance.
}
\vspace{-0.1in}
\label{fig:cs_tensor}
\end{figure}

Here is a brief overview of three popular first-order optimizers whose auxiliary variables we seek to compress: Momentum \citep{sutskever2013mtm, polyak1964mtm} remembers a history of gradient updates, which smooths out random oscillations and accelerates convergence. Adaptive gradient descent algorithms alter the learning rate for each feature based on the frequency of its updates. Sparse, rare features are given larger updates and a higher learning rates. These methods track a history of squared gradients for each feature. Adagrad \citep{duchi2011adagrad} divides the gradient by the square root of the cumulative squared gradient. Adam \citep{kingma2014adam} combines momentum and adaptive learning rates together, so it tracks an exponential average of the gradients and squared gradients.

The count-sketch data structure expects to receive a stream of updates $\Delta$. For the Momentum and Adam optimizers, we need to transform the update operation into a form that is compatible with the count-sketch. For an auxiliary variable $\displaystyle \mX$, the desired update operation is $\displaystyle \mX \pluseq \Delta$. Given the appropriate update operation, we replace the addition assignment operator $\pluseq$ for the original matrix with the Update-Query operation for the Count-Sketch Tensor.

For Momentum, the update rule, given some gradient $g_t$, is $m_t = \gamma \cdot m_{t-1} + g_t \leftrightarrow m_t \pluseq (\gamma - 1) \cdot m_{t-1} + g_t$. For the Adam optimizer, given some constant $c$ and an update $\Delta$, the update rule for the exponential moving average is $x_t = c \cdot x_{t-1} + (1 - c) \cdot \Delta \leftrightarrow x_t \pluseq (1 - c) \cdot (\Delta - x_{t-1})$.



The Count-Sketch is essentially a plug and play replacement that saves memory, while retaining the speed and accuracy of the original matrix. Normally, algorithms that compress memory to save space are slower than their dense counterparts. However, the count-sketch can leverage sparsity by lazily performing updates with high efficiency. In addition, we can gracefully increase the size of the count-sketch for greater accuracy with minimal additional computational cost.

\begin{algorithm}
\caption{Momentum - Count Sketch Optimizer}
\label{alg:mtm_alg}
\begin{algorithmic}
\STATE  Initialize Count-Sketch Tensor $\displaystyle \tM \in \mathbb{R}^{v,w,d}=0$
\STATE $v$ universal hash functions $h_j$
\STATE $v$ random sign functions $s_j$
\STATE Decay Rate $\gamma$, Learning Rate $\eta$

\hrulefill
\STATE  {\bf MOMENTUM}
\STATE  {\bf (Item $i$, Parameter $x \in \mathbb{R}^d$, Gradient $g_t \in \mathbb{R}^d$)}:
\STATE $m_{t-1} \gets$ Query($\displaystyle \tM$, $i$, MEDIAN$)$
\STATE $\Delta_M \gets (\gamma - 1) \cdot m_{t-1} + g_t$
\STATE Update$(\displaystyle \tM$, $i$, $\Delta_M)$
\STATE $\hat{m}_{t} \gets$ Query($\displaystyle \tM$, $i$, MEDIAN$)$
\STATE $x_t = x_{t-1} - \eta_t \cdot m_t$
\end{algorithmic}
\end{algorithm}

\begin{algorithm}
\caption{Adagrad - Count Sketch Optimizer}
\label{alg:adagrad_alg}
\begin{algorithmic}
\STATE  Initialize Count-Min Sketch Tensor $\displaystyle \tV \in \mathbb{R}^{v,w,d}=0$
\STATE $v$ universal hash functions $h_j$
\STATE Learning Rate $\eta$

\hrulefill
\STATE  {\bf ADAGRAD}
\STATE  {\bf (Item $i$, Parameter $x \in \mathbb{R}^d$, Gradient $g_t \in \mathbb{R}^d$)}:
\STATE $\Delta_V \gets g_t^2$
\STATE UPDATE($\displaystyle \tV$, $i$, $\Delta_V)$
\STATE $v_{t} \gets$ QUERY($\displaystyle \tV$, $i$, MIN$)$
\STATE $x_t = x_{t-1} - \eta_t \cdot \frac{g_t}{\sqrt{v_t} + \epsilon}$
\end{algorithmic}
\end{algorithm}

\begin{algorithm}
\caption{Adam - Count Sketch Optimizer}
\label{alg:adam_alg}
\begin{algorithmic}
\STATE Initialize Count-Sketch Tensor $\displaystyle \tM \in \mathbb{R}^{v,w,d}=0$
\STATE Initialize Count-Min-Sketch Tensor $\displaystyle \tV \in \mathbb{R}^{v,w,d}=0$
\STATE $v$ universal hash functions $h_j$
\STATE $v$ random sign functions $s_j$
\STATE 1st Moment Decay Rate $\beta_1$, 2nd Moment Decay Rate $\beta_2$
\STATE Learning Rate $\eta$

\hrulefill
\STATE  {\bf ADAM}
\STATE  {\bf (Item $i$, Parameter $x \in \mathbb{R}^d$, Gradient $g_t \in \mathbb{R}^d$)}:
\STATE // Count-Sketch - 1st Moment
\STATE $m_{t-1} \gets$ Query($\displaystyle \tM$, $i$, MEDIAN$)$
\STATE $\Delta_M \gets (1 - \beta_1)(g_t - m_{t-1})$
\STATE Update$(\displaystyle \tM$, $i$, $\Delta_M)$
\STATE $m_{t} \gets$ Query($\displaystyle \tM$, $i$, MEDIAN$)$

\hrulefill
\STATE // Count-Min Sketch - 2nd Moment
\STATE $v_{t-1} \gets$ Query($\displaystyle \tM$, $i$, MIN$)$
\STATE $\Delta_V \gets (1 - \beta_2)(g_t^2 - v_{t-1})$
\STATE Update$(\displaystyle \tV$, $i$, $\Delta_V)$
\STATE $v_{t} \gets$ Query($\displaystyle \tM$, $i$, MIN$)$

\hrulefill
\STATE $\hat{m}_t = m_t/(1 - \beta_1^t)$
\STATE $\hat{v}_t = v_t/(1 - \beta_2^t)$
\STATE $x_t = x_{t-1} - \eta_t \cdot \frac{\hat{m_t}}{\sqrt{\hat{v_t}} + \epsilon}$
\end{algorithmic}
\end{algorithm}

\textbf{Count-Min Sketch Cleaning Heuristic:} Since the Count-Min Sketch only accepts non-negative values, it always overestimates the desired value. The Count-Min Sketch is used to estimate the adaptive learning rate for the Adagrad and Adam optimizers. Therefore, an overestimate will prematurely slow the learning rate for certain elements. Our heuristic solution is to clean the sketch periodically by multiplying the tensor by a constant $\alpha$ where $0 \le \alpha \le 1$ every $C$ iterations. Instead of this heuristic, an alternative is to use principled adaptive sketches~\cite{shrivastava2016time}, which can continuously clean the sketch and decay the overestimates over time. 

Periodic cleaning works well with the Count-Min Sketch because it provides a better estimate for the top-$k$ elements. During training, the accumulation of updates allows for the heavy hitter estimates to emerge in the sketch \cite{missionICML2018}. Due to stochastic gradient descent, there is a certain amount of noise in the gradient, so cleaning immediately after each update destroys the internal state of the sketch. Furthermore, cleaning reduces the scale of the sketch, reducing the overall noise level. If the signal to noise ratio is too high, future heavy hitter are ignored because there values are equal to the noise in the sketch. 



\section {Theoretical Analysis} \label{sec:theory}
For stochastic non-convex optimization \citep{yogi_nips_2018}, we measure how the algorithm converges to a stationary point at iteration $x_t$---i.e., $\norm{\nabla f(x_t)}^2 \le c$ for some small constant $c$.  In our analysis, we focus on the Count-Min Sketch Adam optimizer where we do not track the 1st moment---i.e., $\beta_1=0$. This optimizer was used in the Amazon Extreme Classification task (See Section \ref{sec:amz}) in order to save additional memory, similar to the Adafactor optimizer \cite{adafactorICML2018}. 

We assume that the function $f$ is $L$-smooth with bounded gradients:  Function $f$ has bounded gradients - $[f(x_t)]_i \le G_i,\forall x \in \mathbb{R}^d, i \in [d], G = \norm{\vec{G}}_\infty$. In addition, we receive an unbiased stochastic gradient estimate $g_t$ with fixed variance $\sigma^2$. Then, the following theorem holds:


\begin{theorem} \label{eq:cs_thm}
Let the learning rate $\eta_t = \eta, \forall t \in [T]$. Assume $\beta_2$, $\eta$, and $\epsilon$ are selected such that $\eta \le \frac{\epsilon}{2L}$ and $\sqrt{1-\beta_2} \le \frac{\epsilon}{4G}$. Given a Count-Min Sketch matrix with width $\Theta(\frac{1}{\epsilon_1})$ and depth $\Theta(\log(\frac{dT}{\delta}))$, we have the following bound that holds for Count-Min Sketch Adam with probability $(1-\delta)$ where $M = (\frac{G \sqrt{1-\beta_2}}{\epsilon^2} \sum^{d}_{i=0} \norm{\vec{G}}^2_2)$:
\begin{equation*}
\min_{t}~\mathbb{E} \norm{\nabla f(x_t)}^2 \le  \mathcal{O} \Big( \frac{f(x_0) - f(x_*)}{\eta T} + \sigma^2 + \epsilon_1 M \Big) 
\end{equation*}
\end{theorem}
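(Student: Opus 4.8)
The plan is to combine the standard one-step descent analysis for non-convex adaptive methods with a perturbation argument that accounts for the Count-Min Sketch approximation. Since we take $\beta_1 = 0$, the first-moment query returns $m_t = g_t$ (the bias correction being trivial), so the effective iteration is $x_{t+1} = x_t - \eta\, g_t/(\sqrt{\bar v_t} + \epsilon)$ taken coordinate-wise, where $v_t = \beta_2 v_{t-1} + (1-\beta_2)g_t^2$ is the true second moment and $\bar v_t$ is its Count-Min Sketch estimate. First I would invoke $L$-smoothness to obtain the descent inequality $f(x_{t+1}) \le f(x_t) + \langle \nabla f(x_t),\, x_{t+1}-x_t\rangle + \tfrac{L}{2}\norm{x_{t+1}-x_t}^2$, substitute the update, and take the conditional expectation given the history $\mathcal{F}_{t-1}$, with respect to which $\nabla f(x_t)$ and $v_{t-1}$ are measurable but $g_t$ is not.

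The central difficulty is that the per-coordinate step size $1/(\sqrt{\bar v_{t,i}}+\epsilon)$ is correlated with the current gradient $g_t$, both through the moving-average update and through the sketch, so the conditional expectation of the linear term does not factor. My strategy is to anchor the step size at the $\mathcal{F}_{t-1}$-measurable quantity $1/(\sqrt{v_{t-1,i}}+\epsilon)$ and write $\frac{g_{t,i}}{\sqrt{\bar v_{t,i}}+\epsilon} = \frac{g_{t,i}}{\sqrt{v_{t-1,i}}+\epsilon} + g_{t,i}\big(\frac{1}{\sqrt{\bar v_{t,i}}+\epsilon}-\frac{1}{\sqrt{v_{t-1,i}}+\epsilon}\big)$. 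Taking the conditional expectation of the first piece uses $\E[g_{t,i}\mid\mathcal{F}_{t-1}] = [\nabla f(x_t)]_i$ and yields the clean descent contribution $\sum_i [\nabla f(x_t)]_i^2/(\sqrt{v_{t-1,i}}+\epsilon)$, which after lower-bounding the denominator produces a positive multiple of $\norm{\nabla f(x_t)}^2$ on the left-hand side. The second piece is the perturbation to control, and I would telescope the step-size difference through $v_t$ into a moving-average part $\frac{1}{\sqrt{v_{t,i}}+\epsilon}-\frac{1}{\sqrt{v_{t-1,i}}+\epsilon}$ and a sketch part $\frac{1}{\sqrt{v_{t,i}}+\epsilon}-\frac{1}{\sqrt{\bar v_{t,i}}+\epsilon}$.

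For the moving-average part I would use the identity $\frac{1}{\sqrt{a}+\epsilon}-\frac{1}{\sqrt{b}+\epsilon} = \frac{\sqrt{b}-\sqrt{a}}{(\sqrt{a}+\epsilon)(\sqrt{b}+\epsilon)}$, the recursion $v_{t}-v_{t-1} = (1-\beta_2)(g_t^2-v_{t-1})$, and the bounded-gradient assumption; a factor $\sqrt{1-\beta_2}$ is extracted here via $\sqrt{v_{t,i}} \ge \sqrt{1-\beta_2}\,\abs{g_{t,i}}$, and the hypothesis $\sqrt{1-\beta_2}\le \epsilon/(4G)$ keeps this correction small. For the sketch part I would invoke the Count-Min Sketch guarantee $v_{t,i}\le \bar v_{t,i}\le v_{t,i}+\epsilon_1\norm{v_t}_1$, bound $\norm{v_t}_1\le \norm{\vec G}_2^2$ using that $v_t$ is a convex combination of bounded $g_s^2$, retain the $+\epsilon$ offset in the denominator to prevent the inverse square root from blowing up, and multiply by the gradient bound; after summing over the $d$ coordinates the non-vanishing residual assembles into a term of the advertised form $\epsilon_1 M$ with $M = \frac{G\sqrt{1-\beta_2}}{\epsilon^2}\sum_i\norm{\vec G}_2^2$. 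The quadratic term $\tfrac{L\eta^2}{2}\norm{g_t/(\sqrt{\bar v_t}+\epsilon)}^2$ I would bound using $1/(\sqrt{\bar v_{t,i}}+\epsilon)^2\le 1/\epsilon^2$ and $\E\norm{g_t}^2\le \norm{\nabla f(x_t)}^2+\sigma^2$, and the condition $\eta\le \epsilon/(2L)$ ensures it does not overwhelm the descent term, leaving the residual $\sigma^2$ floor.

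Finally I would telescope the per-step inequalities over $t=0,\dots,T-1$, use $f(x_T)\ge f(x_*)$, divide by $T$, and pass to $\min_t \E\norm{\nabla f(x_t)}^2$, which is at most the running average. The probabilistic ingredient is a union bound: the sketch estimate must be simultaneously accurate for all $d$ coordinates at all $T$ iterations, which is exactly why the depth is taken as $\Theta(\log(dT/\delta))$ so that the whole bound holds with probability $1-\delta$. The step I expect to be the main obstacle is controlling the sketch correction cleanly: because $\bar v_t$ sits inside a square root in a denominator, a naive bound on $\sqrt{\bar v_{t,i}}-\sqrt{v_{t,i}}$ can introduce an uncontrolled $1/\sqrt{v_{t,i}}$ factor or degrade the dependence to $\sqrt{\epsilon_1}$, so the argument must lean on the one-sided (overestimating) nature of the Count-Min Sketch together with the $+\epsilon$ regularizer to keep the perturbation of the inverse step size uniformly bounded and linear in $\epsilon_1$.
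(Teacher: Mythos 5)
Your overall skeleton matches the paper's proof exactly: the $L$-smoothness descent inequality, an $\mathcal{F}_{t-1}$-measurable anchor for the per-coordinate step size, extraction of $\sqrt{1-\beta_2}$ via $\sqrt{v_{t,i}}\ge\sqrt{1-\beta_2}\,\abs{g_{t,i}}$, telescoping with $f(x_*)\le f(x_{T+1})$, and the union bound over $dT$ events justifying depth $\Theta(\log(dT/\delta))$. However, the point where you diverge --- anchoring at the true $1/(\sqrt{v_{t-1,i}}+\epsilon)$ and splitting the perturbation into a moving-average part plus a sketch part --- is precisely where the claimed constant $M\propto\sqrt{1-\beta_2}$ is lost; this is the obstacle you flagged at the end but did not resolve. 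In your decomposition the sketch part carries the error $\bar v_{t,i}-v_{t,i}\le\epsilon_1\norm{v_t}_1$, i.e.\ the overestimate of the \emph{accumulated} second moment, which has no $(1-\beta_2)$ prefactor. To keep that term linear in $\epsilon_1$ (avoiding the $\sqrt{\epsilon_1}$ degradation you correctly want to avoid), you must write it as $(\bar v_{t,i}-v_{t,i})/(\sqrt{\bar v_{t,i}}+\sqrt{v_{t,i}})$ and absorb the factor $\abs{g_{t,i}}$ using $\sqrt{v_{t,i}}\ge\sqrt{1-\beta_2}\,\abs{g_{t,i}}$, which costs a factor $1/\sqrt{1-\beta_2}$. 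The residual you assemble is then of order $\epsilon_1 G\, d \norm{\vec{G}}_2^2/(\sqrt{1-\beta_2}\,\epsilon^2)$, i.e.\ worse than the theorem's $\epsilon_1 M$ by a factor $1/(1-\beta_2)$, and it blows up exactly in the regime the hypotheses impose ($\sqrt{1-\beta_2}\le\epsilon/(4G)$, so $\beta_2\to1$). So your route proves a bound, but not the stated one.

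The paper's fix --- the one genuinely sketch-specific idea in its proof --- is to anchor at the \emph{sketched} quantity $1/(\sqrt{\beta_2\hat v_{t-1,i}}+\epsilon)$ rather than at the true $v_{t-1,i}$, and to exploit linearity of the count-min sketch: the consecutive difference $\hat v_{t,i}-\beta_2\hat v_{t-1,i}$ is itself the sketch estimate of the single fresh increment $(1-\beta_2)g_t^2$, hence is bounded (the paper's Lemma~A2) by $(1-\beta_2)\bigl(g_{t,i}^2+\epsilon_1\norm{g_t^2}_1\bigr)$, so the $\epsilon_1$ error inherits the $(1-\beta_2)$ prefactor. After paying the $1/\sqrt{1-\beta_2}$ from $\abs{g_{t,i}}/\sqrt{\hat v_{t,i}}$, this leaves $\sqrt{1-\beta_2}\,\epsilon_1\norm{g_t^2}_1$ per coordinate and hence the advertised $M$. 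A secondary technical issue: bounding the quadratic term by $1/\epsilon^2$ globally, as you propose, does not let it be absorbed into the descent term under only $\eta\le\epsilon/(2L)$ (your descent coefficient is $\eta/(G+\epsilon)$, so you would need $\eta\lesssim\epsilon^2/(LG)$); the paper instead keeps the per-coordinate denominator $\sqrt{\beta_2\hat v_{t-1,i}}+\epsilon$ inside the quadratic term so that the $[\nabla f(x_t)]_i^2$ part of $\mathbb{E}[g_{t,i}^2]$ cancels against the descent term coordinate-wise, for which $L\eta/(2\epsilon)\le1/4$ suffices.
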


The proof of Theorem \ref{eq:cs_thm} is found in the Appendix. For comparison, we have the convergence bound from \citep{yogi_nips_2018} for the standard Adam optimizer where $\beta_1=0$:

\begin{equation*}
\label{eq:adam_thm}
\min_{t}~\mathbb{E} \norm{\nabla f(x_t)}^2 \le \mathcal{O} \Big( \frac{f(x_0) - f(x_*)}{\eta T} + \sigma^2 \Big) 
\end{equation*}

\textbf{Discussion:}  The bounds are similar except for the additional term caused by the Count-Min Sketch approximation. The theorem states that the Count-Min Sketch Adam converges to a region around a stationary point with radius $O(\sigma^2 + \epsilon_1 M)$. The additional error term $\epsilon_1 M$ depends on the adaptivity of the optimizer $\beta_2$, the error rate $\epsilon_1$ of the sketch, and the gradient norm $\norm{G}^2_2$. The error rate $\epsilon_1$ is proportional to the width of the sketch $\epsilon_1 = 1/w$ and corresponds with the number of collisions along each row in the sketch. We can improve convergence gracefully by increasing the sketch's width, which reduces the error caused when multiple components collide in the same bin. In practice, we bound the gradient norm to reasonable constant to prevent instability---i.e., $\norm{G}^2_2 \le C^2$. When the sketch width $w=\Theta(d)$, the error term becomes a small constant. 

Note that the gradient norm decreases over time. Thus, the error caused by the count-sketch approximation decreases as the algorithm progresses, and we can shrink the sketch. A nice property of the count-sketch data structure is that you can add one half of the sketch to the other, reducing its size by half while maintaining its accuracy guarantees. Please see \cite{matusevych2012hokusai} for more details. 

The failure probability $\delta$ of exceeding the Count-Min Sketch error bound is proportional to the depth of the sketch $\delta = dT/e^v$. In our theoretical results, the depth of the sketch depends logarithmically on the number of parameters $d$ and the number of time steps $T$. However, our experiments show that a modest depth size of 3-5 is sufficient.

\section{Related Work}
{\bf Feature Compression:}  A straight-forward option is to use dimensionality reduction techniques to minimize the number of features, which in turn decreases the size of the model and optimizer simultaneously. \cite{hashNIPS2017} describes a hash embedding scheme where the output embedding for a feature is a weighted sum between the embedding vectors and the weight vector. Their goal was to minimize the size of the embedding layer while preserving its flexibility to model large vocabularies. However, dramatically reducing the feature space may sacrifice model accuracy. For example, training the BERT language model \cite{devlin2018bert} on a GPU with 12-16 GB memory requires a smaller, less effective architecture than the full-sized model trained on the 64 GB Google TPU.

{\bf Gradient Checkpointing:} \cite{siskind2018divide, chen2016training} describe an orthogonal approach where training an $N$-layer neural network requires $\sqrt{N}$ memory. Their insight was that storing the activations for the back-propagation pass is the most memory-intensive part of training. Instead of storing all the activations, their algorithm checkpoints certain sections of the neural network and lazily recomputes the activations during the back-propagation phase. In other words, their approach saves memory by sacrificing extra computation time.

{\bf Low-Rank Approximation:} A low-rank approximation has the potential to reduce the number of parameters from $O(nd)$ to $O(nr + rd)$ where $r \ll \min(n,d)$. However, updating the low-rank matrices is non-trivial. \cite{adafactorICML2018} demonstrated that there exists a unique, fast update rule for a rank-1 approximation that minimizes the I-divergence between the approximation and original matrix. Their rank-1 approximation was limited to non-negative matrices, so only the second moment of the Adam optimizer was compressed in their experiments. The drawback of this approach is that it requires materializing the entire matrix via an outer-product, which is prohibitive for large-scale embedding and softmax layers. In addition, since their update rule only applies for rank-1 vectors, their approach lacks the flexibility to increase the model's memory capacity gracefully.

{\bf Count-Sketch:} The original objective of the Count-Sketch data structure was to estimate the frequency of various events in the streaming setting. Recently, \cite{missionICML2018, sketchSIGMOD2018} demonstrated that the Count-Sketch can learn a compressed model that accurately preserves the features with the largest weights. Their objective focused on feature extraction in ultra-high dimensional settings and was limited to simple, linear models. In this work, we seek to use the Count-Sketch to preserve the different auxiliary variables maintained by commonly used first-order optimizers. The ideal solution is for the memory cost of the optimizer to grow sub-linearly with the model size, giving us the flexibility to increase the model's capacity.

\begin{table}
\caption{Trade-offs between the Count-Sketch and Low-Rank Approximation. $k$ is the number of active features or classes. $r$ is the rank of the two factors where $r << min(n, d)$. The Count-Sketch data structure is ideally suited for the sparse embedding and softmax layers because it does not require a matrix-multiplication to reconstruct the entire auxiliary variable. }
\begin{center}
    \begin{tabular}{ |r|r|r| } 
    \hline
     Type & Count-Sketch &  Low-Rank \\
    \hline
    Memory & $\mathcal{O}(n \cdot \log{d})$ & $\mathcal{O}(nr+rd)$ \\
    Gradient Type & Sparse & Dense\\
    Memory Control & Flexible & Fixed \\
    Query Time & $\mathcal{O}(nk)$ & $\mathcal{O}(nrm)$ \\
    \hline
    \end{tabular}
\end{center}
\label{fig:tradeoffs}
\end{table}

\section{Experiments}
All of the experiments were performed with the PyTorch framework on a single machine - 2x Intel Xeon E5-2660 v4 processors (28 cores / 56 threads) with 512 GB of memory using a single Nvidia Tesla V100. The code\footnote{\url{https://github.com/rdspring1/Count-Sketch-Optimizers}} for the Count-Sketch Optimizer is available online. We designed the experiments to answer these questions:
\begin{enumerate}
\item Does the model's gradients and the optimizer's auxiliary variables follow a power-law distribution?
\item How accurate is our estimate of the auxiliary variables retrieved from the count-sketch data structure?
\item What the effect of cleaning the count-min sketch on convergence time and accuracy?
\item How well does our count-sketch optimizer compare against the low-rank approximation given the same number of parameters?
\item Does our count-sketch optimizer match original baseline in terms of speed and accuracy?
\end{enumerate}
Here are the five datasets used in the experiments:
\begin{enumerate}
    \item Wikitext-2 \citep{merity2016pointer} - This dataset was extracted from Wikipedia and contains 2M training tokens with a vocabulary size of 33,278. (10.8 MB)
    \item Wikitext-103 \citep{merity2016pointer} - A larger version of the Wikitext-2 dataset that contains 103M training tokens and its vocabulary size is 267,735. (539.2 MB)
    \item 1-Billion Word (LM1B) \citep{chelba2013gbw} - This large-scale corpus contains 0.8 billion training tokens and a vocabulary with 793,471 words. (4.1 GB) An open-sourced PyTorch model is available online \footnote{\url{https://github.com/rdspring1/PyTorch_GBW_LM}}
    \item MegaFace - A facial recognition dataset derived from MegaFace (Challenge 2) \footnote{\url{http://megaface.cs.washington.edu/}}. Each person is a candidate class, but we only select classes with at least 10 images. Thus, this sampled dataset contains 1,943,802 examples with 80,204 classes. 10K images are randomly sampled to create the test dataset. (4 GB)
    \item Amazon - This sampled recommendation dataset contains 70.3 million examples and over 49.5 million object classes. (20.9 GB)
\end{enumerate}
We implemented the following approaches to compare and contrast against our approach: 
\begin{enumerate}
    \item Non-Negative Matrix Factorization (NMF) Rank-1 --- This decomposition minimizes the I-divergence between the auxiliary variable and the approximation formed from two rank-1 vectors. However, it is limited to non-negative matrices, so it cannot compress the auxiliary variables for Momentum or the 1st Moment of Adam. \citep{adafactorICML2018}
    \item $\ell_2$  Rank-1 --- After each update, we perform an SVD decomposition of the auxiliary variable, and only keep the top singular value and its corresponding vectors. During the subsequent update, the auxiliary variable is reconstructed via an outer product. Unlike the NMF Rank-1 Approximation, this approach is not limited to non-negative values, but it is extremely slow and cannot be used in practice.
    \item Count-Sketch --- As described in Section \ref{sec:algorithm}. This approach is also not limited to non-negative values and is capable of compressing the auxiliary variables for all optimizers efficiently.
\end{enumerate}

\begin{table} [ht]
\caption{Abbreviations}
\begin{center}
    \begin{tabular}{ |r|r| } 
    \hline
    Title & Symbol \\
    \hline
    Count-Sketch & CS \\
    Low-Rank & LR \\
    Adam 1st Moment & M \\
    Adam 2nd Moment & V \\
    Non-Negative Matrix Factorization & NNF \\
    \hline
    \end{tabular}
\end{center}
\label{fig:abbv}
\end{table}

\subsection{Small-Scale Experiments}
{\bf Wikitext-2:} The language model is a 2-layer LSTM with 672 hidden units. The dimensionality of the word embeddings is equal to the number of hidden units. The model is unrolled 35 steps for the back-propagation through time (BPTT). The model is regularized via Dropout with a 50\% chance of disabling a unit. We train the model for 40 epochs with a mini-batch size of 20. For Momentum, the learning rate is 2.5, the decay rate $\gamma$ is 0.9, and we clip the gradient norm to 0.25. For Adam, the learning rate is 0.001, the beta values $\beta_1, \beta_2$ are (0.9, 0.999), and gradient clipping is 1. We reduce the learning rate by $4\times$ whenever the validation error plateaus. We use the full softmax layer, so only the embedding layer is sparse for this dataset.

{\bf $\ell_2$ -Norm Approximation Error:} Fig.~\ref{fig:l2_dist} shows the $\ell_2$-Norm between the approximation and the original auxiliary variable over several training iterations. The left figure is for the Momentum optimizer, while the right figure is for the 2nd Moment for the Adam optimizer. All of the methods are given roughly an equal amount of parameters to approximate the original auxiliary variable. For the Wikitext-2 dataset, the embedding and softmax layers use [33,278, 256] matrices. Therefore, the rank-1 decomposition uses two vectors that use 33,278 + 256 = 33,534 parameters. The count-sketch data structure is represented with a [3, 16, 672] tensor, containing 32,256 parameters. Our count-sketch approach maps the 33,278 word vocabulary into 16 distinct bins, so there are about 2,080 collisions for each bucket.

 The Adam optimizer's 2nd Moment is strictly non-negative and is suitable for the NMF Rank-1 approximation. For the Momentum variable, we supplement the NMF decomposition with the $\ell_2$ SVD decomposition. The $\ell_2$ SVD decomposition maintains a good approximation of the Momentum variable. However, it is extremely slow during training, so we only show the approximation error for the first epoch of training. As expected, the NMF Rank-1 baseline poorly approximates the momentum variable, which is not strictly non-negative. It experiences significant variance in its approximation quality. The Count-Sketch is a consistent estimator for both variables with slightly more error for both variables. 

{\bf Test Perplexity:} Tables~\ref{fig:wiki2_mtm_perf},\ref{fig:wiki2_adam_perf} show the test perplexity after training the model with the Momentum and Adam optimizers. For the momentum optimizer, the NNM Low-Rank approximation performs poorly, reinforcing the results from Fig.~\ref{fig:l2_dist}. When only the 2nd moment is compressed, the NNM Low-Rank and Count-Sketch approximations have negligible differences. When we compress both the 1st and 2nd moments with the Count-Sketch, there is some minor accuracy loss from the original optimizer.

\begin{table}
\caption{Test Perplexity for Momentum Optimizer on the Wikitext-2 dataset. The size of the count-sketch tensor is [3, 16, 672] while the rank-1 approximation uses 33,278 + 672 parameters.}
\begin{center}
    \begin{tabular}{ |r|r|r| } 
    \hline
    Momentum & CS & LR-NMF \\
    \hline
    94.25 & 95.93 & 176.31 \\
    \hline
    \end{tabular}
\end{center}
\label{fig:wiki2_mtm_perf}
\end{table}

\begin{table}
\caption{Test Perplexity for Adam Optimizer on the Wikitext-2 dataset. The modifiers indicate which auxiliary variables are compressed.}
\begin{center}
    \begin{tabular}{ |r|r|r|r| } 
    \hline
     CS-MV & Adam & CS-V & LR-NMF-V \\
    \hline
    109.24 & 105.14 & 106.32 & 106.21 \\
    \hline
    \end{tabular}
\end{center}
\label{fig:wiki2_adam_perf}
\end{table}

\begin{figure} [ht]
\begin{center}
\mbox{
\includegraphics[width=0.233\textwidth]{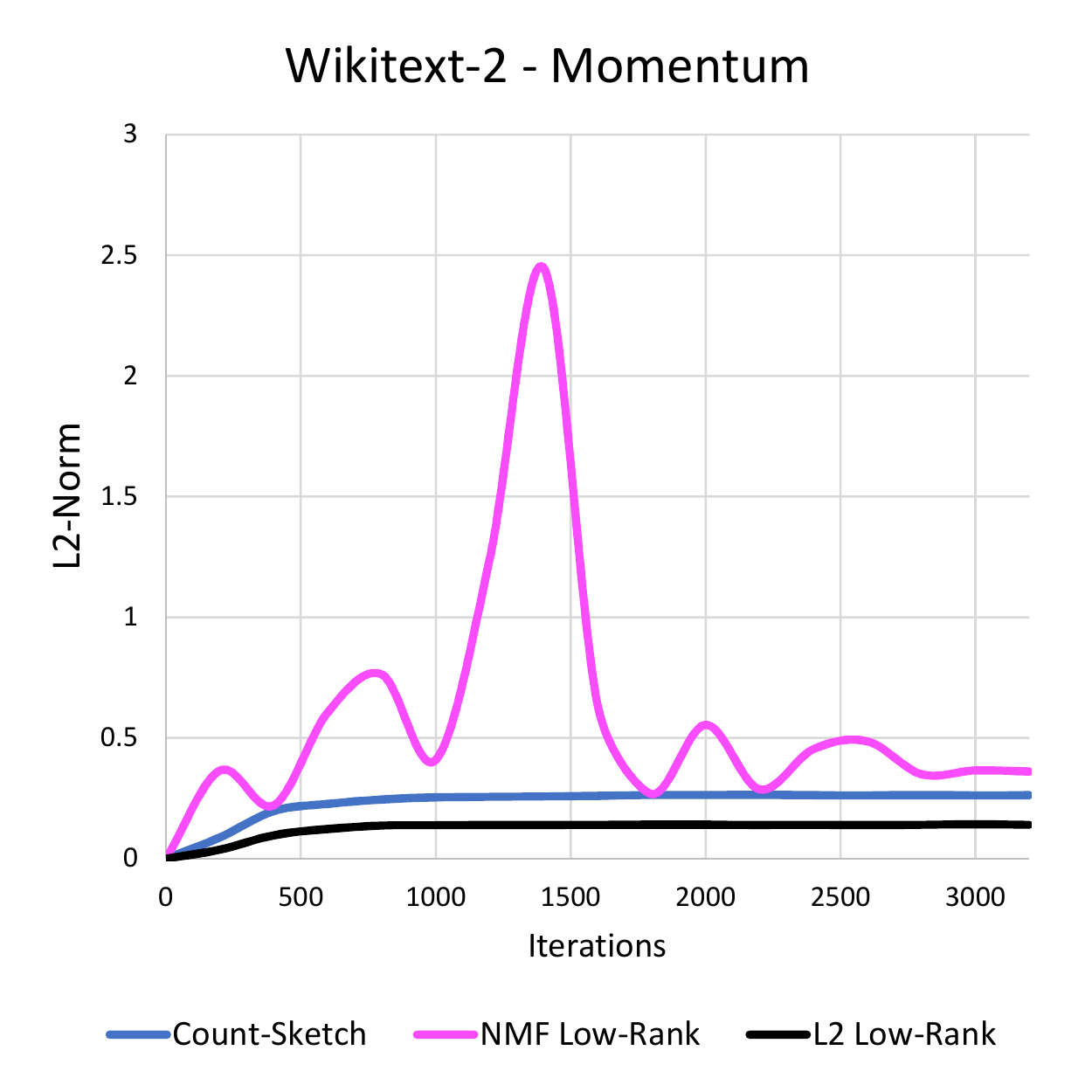}
\includegraphics[width=0.233\textwidth]{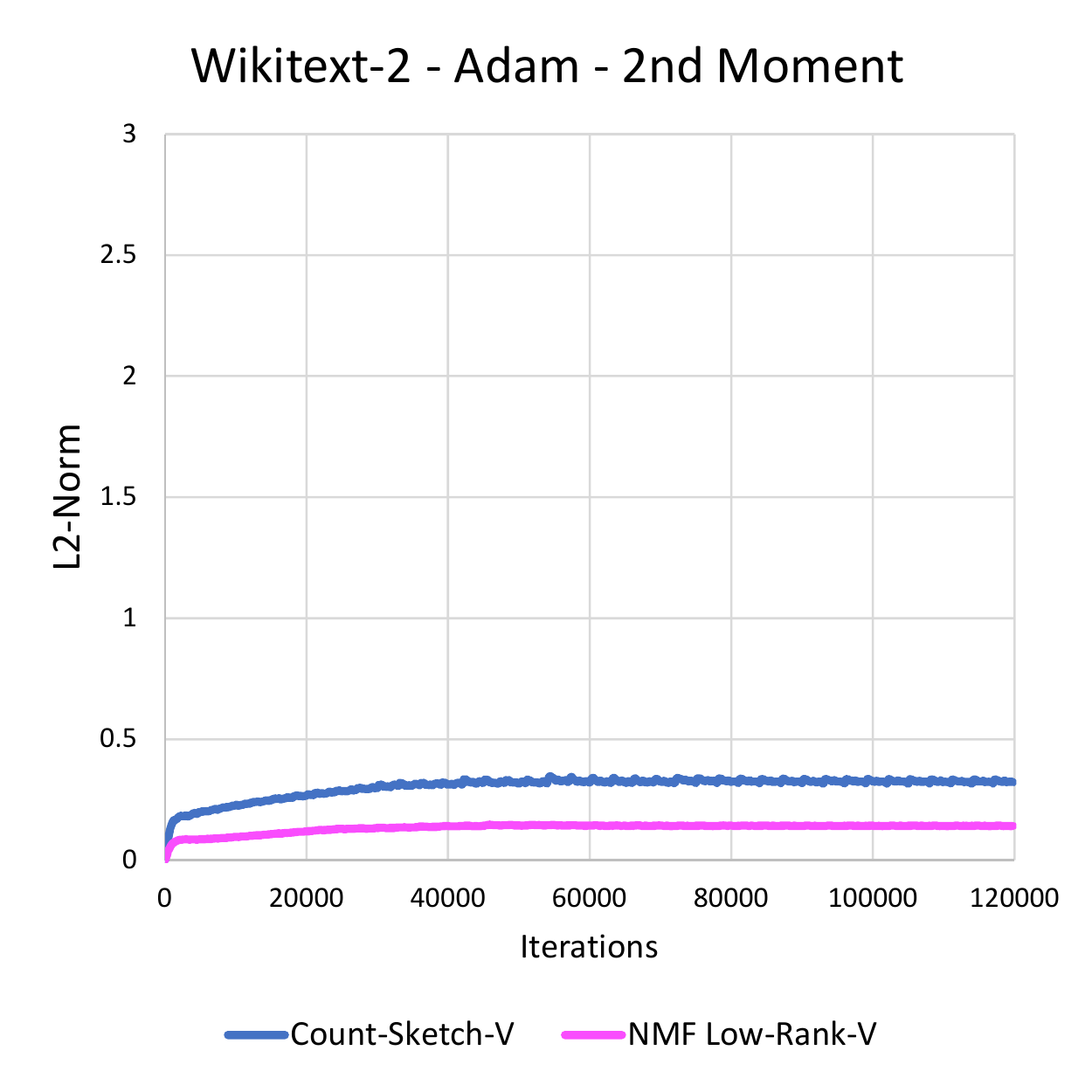}
}
\end{center}
\caption{Left - Momentum, Right - Adam - 2nd Moment. $\ell_2$-Norm between the approximation and the original auxiliary variable.}
\label{fig:l2_dist}
\end{figure}

\textbf{MegaFace:} For this experiment, we obtain pretrained embeddings of size 512 from the FaceNet architecture \cite{schroff2015facenet} trained on the MS-Celeb-1M dataset \footnote{\url{https://github.com/davidsandberg/facenet}}. Afterwards, we train a softmax classifier on the MegaFace dataset using LSH Sampling \cite{pmlr-v80-yen18a, vijayanarasimhan2014deep}. For LSH Sampling, we use SimHash --- Signed Random Projection (SRP) with K=15 bits per hash fingerprint. There are L=16 hash tables that are rebuilt every 250 iterations. For Adam, the learning rate is 0.001 and the beta values $\beta_1, \beta_2$ are (0.9, 0.999). For Adagrad, the learning rate is 0.1. All the models were trained for 10 epochs.

Fig.~\ref{fig:cleaning} shows the effect of cleaning the Count-Min Sketch Tensor on its corresponding optimizer. We measure how the testing accuracy, convergence rate, and auxiliary variable error changes because of cleaning for the Adam and Adagrad optimizers. The Count-Min Sketch tensor is set to $20\%$ of the original variable's size. For Adam, the cleaning scheme is every 125 iterations, multiply the count-min sketch by a constant $0.2$. For Adagrad, the rate of cleaning is the same, but the constant is changed to $0.5$.

For both Adam and Adagrad, there is a noticeable drop in $\ell_2$-Norm error with cleaning, which reflects positively in terms of test accuracy and convergence. For Adam, the count-sketch optimizer with cleaning closely matches the convergence rate of the baseline and slightly surpasses its test accuracy. The test accuracy for Count-Sketch with cleaning is 69.4\%, while the baseline is 69.03\%. For Adagrad, cleaning did not improve the initial convergence rate, but allowed the final test accuracy to match the baseline. There is a solid $1\%$ improvement in test accuracy from $68.37\%$ to $69.28\%$ by using cleaning for the Count-Sketch Adagrad optimizer.

Given that the Adam optimizer already contains an exponential decay term, it is surprising that cleaning is necessary. However, despite further hyper-parameter tuning, the count-sketch optimizer with cleaning still achieves the best performance. For dense gradients, the decay term is applied to all elements. Since the gradients are sparse, only the non-zero elements are updated. Thus, the decay is applied in an irregular fashion for the elements in the sketch.


\begin{figure*} [ht]
\begin{center}
\mbox{
\includegraphics[width=0.2425\textwidth]{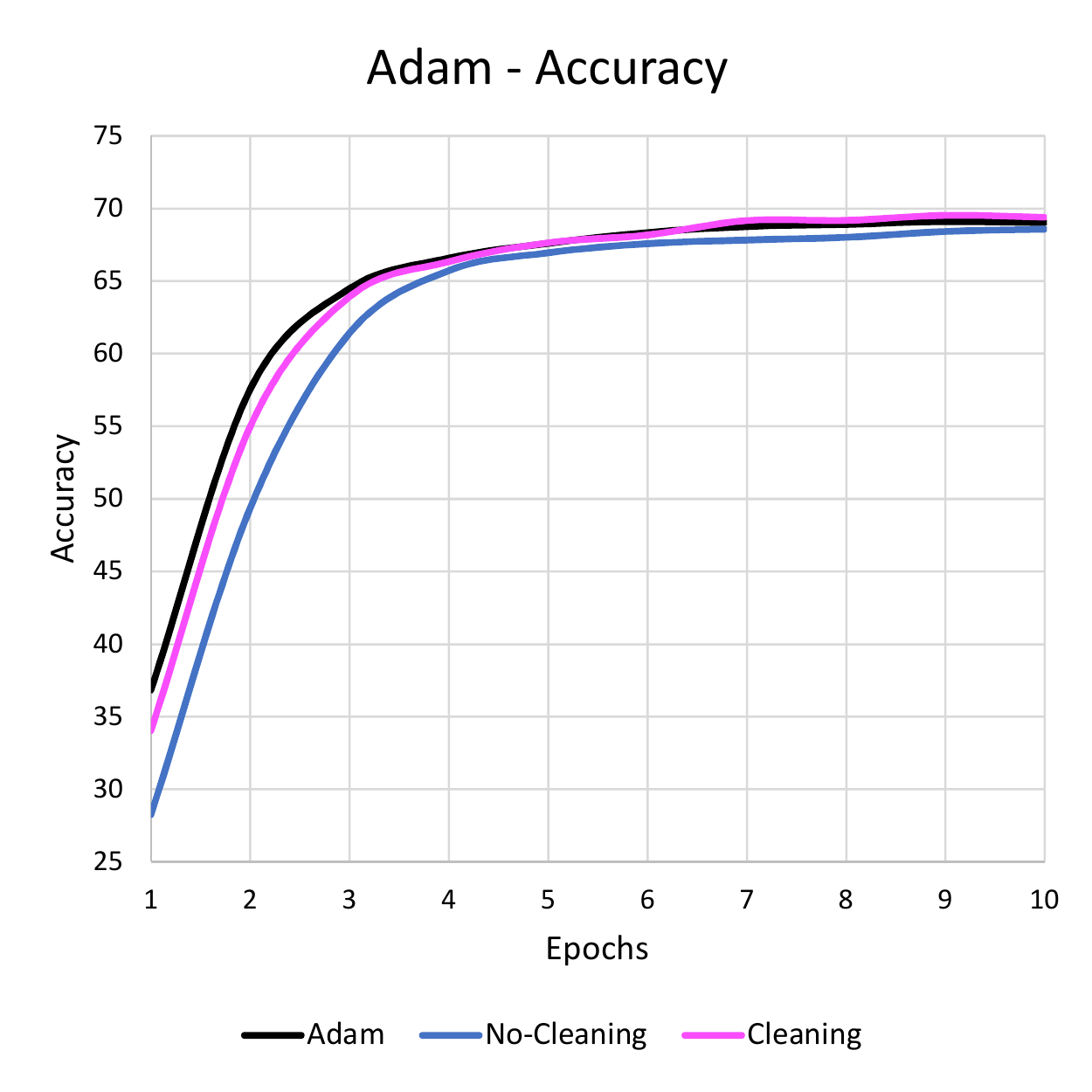}
\includegraphics[width=0.2425\textwidth]{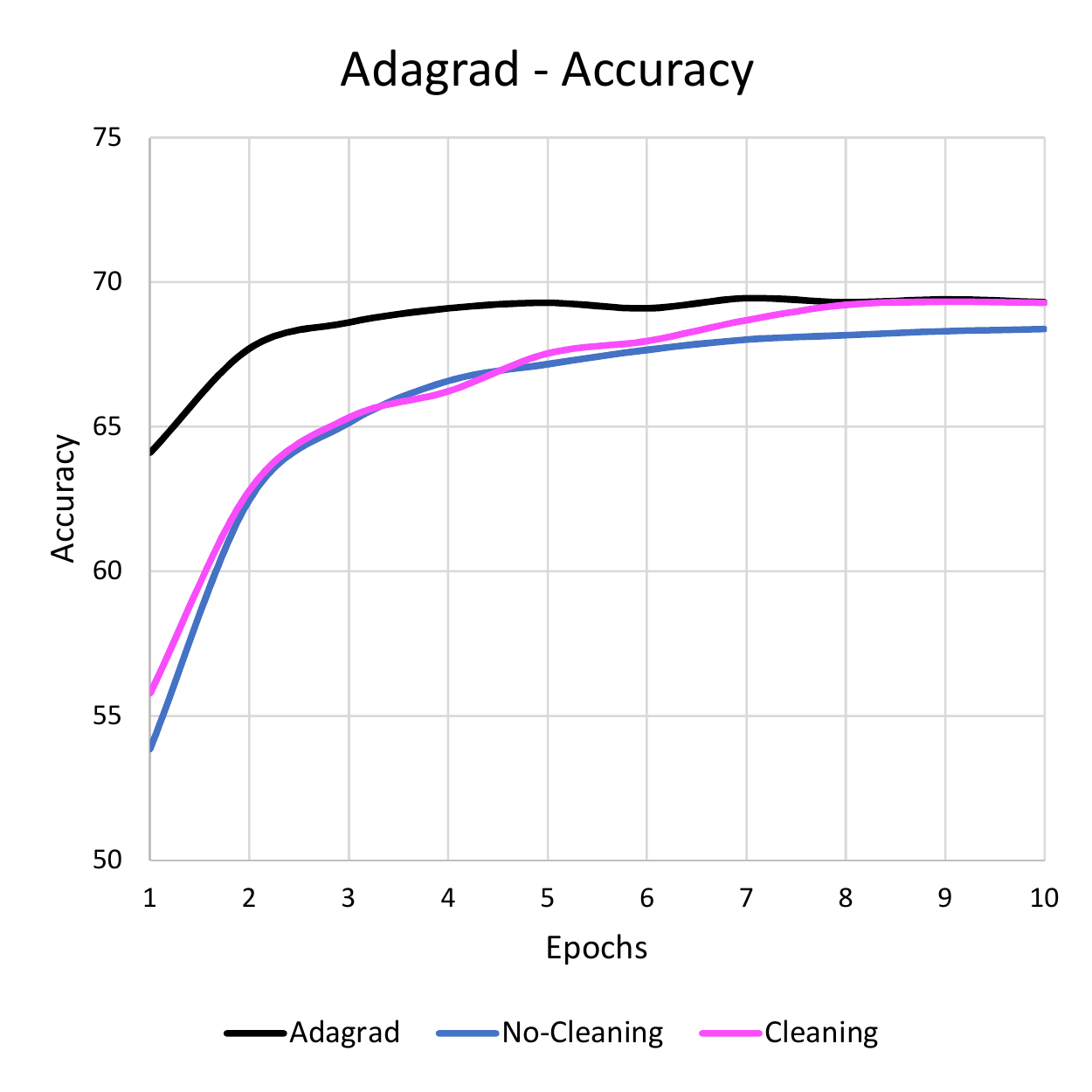}
\includegraphics[width=0.2425\textwidth]{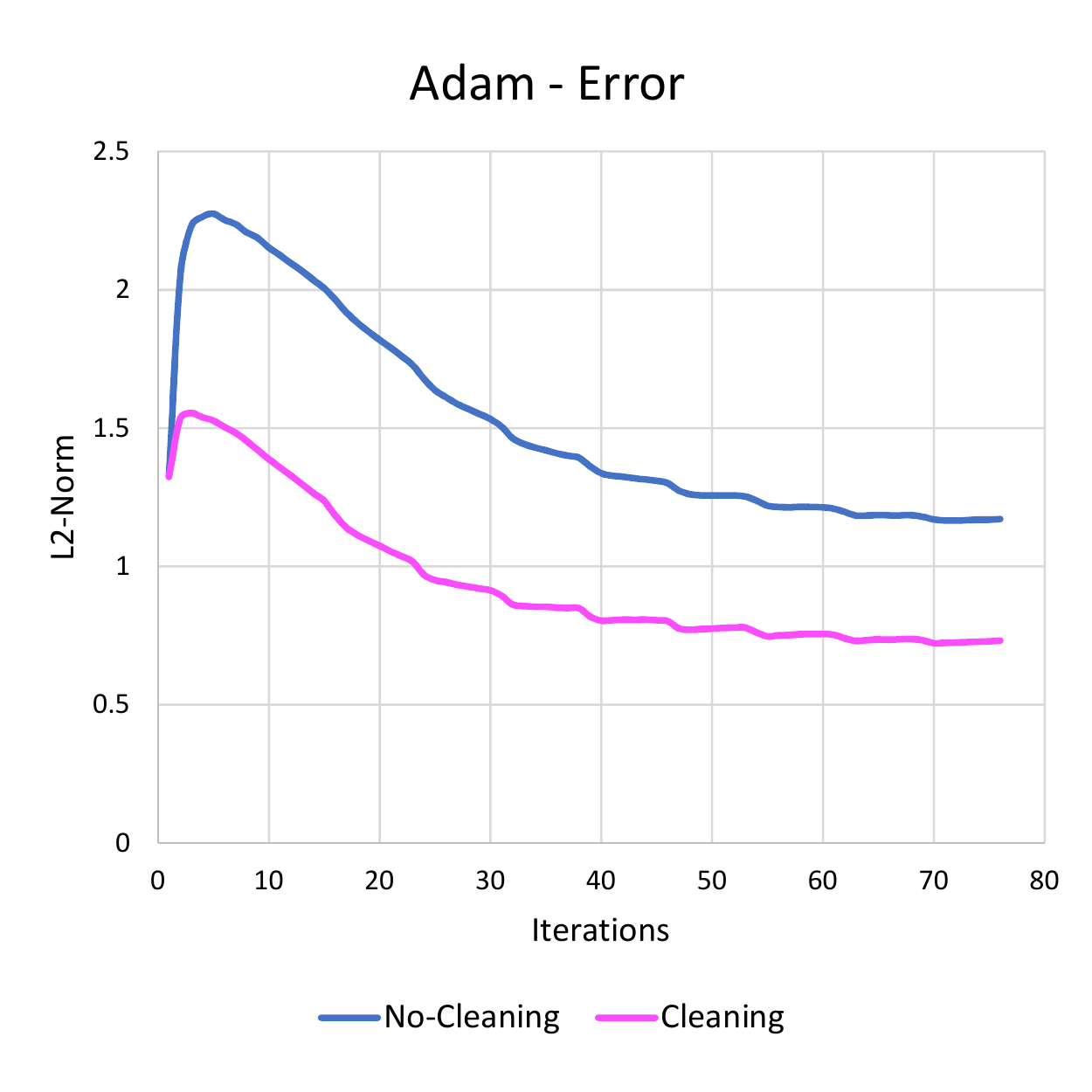}
\includegraphics[width=0.2425\textwidth]{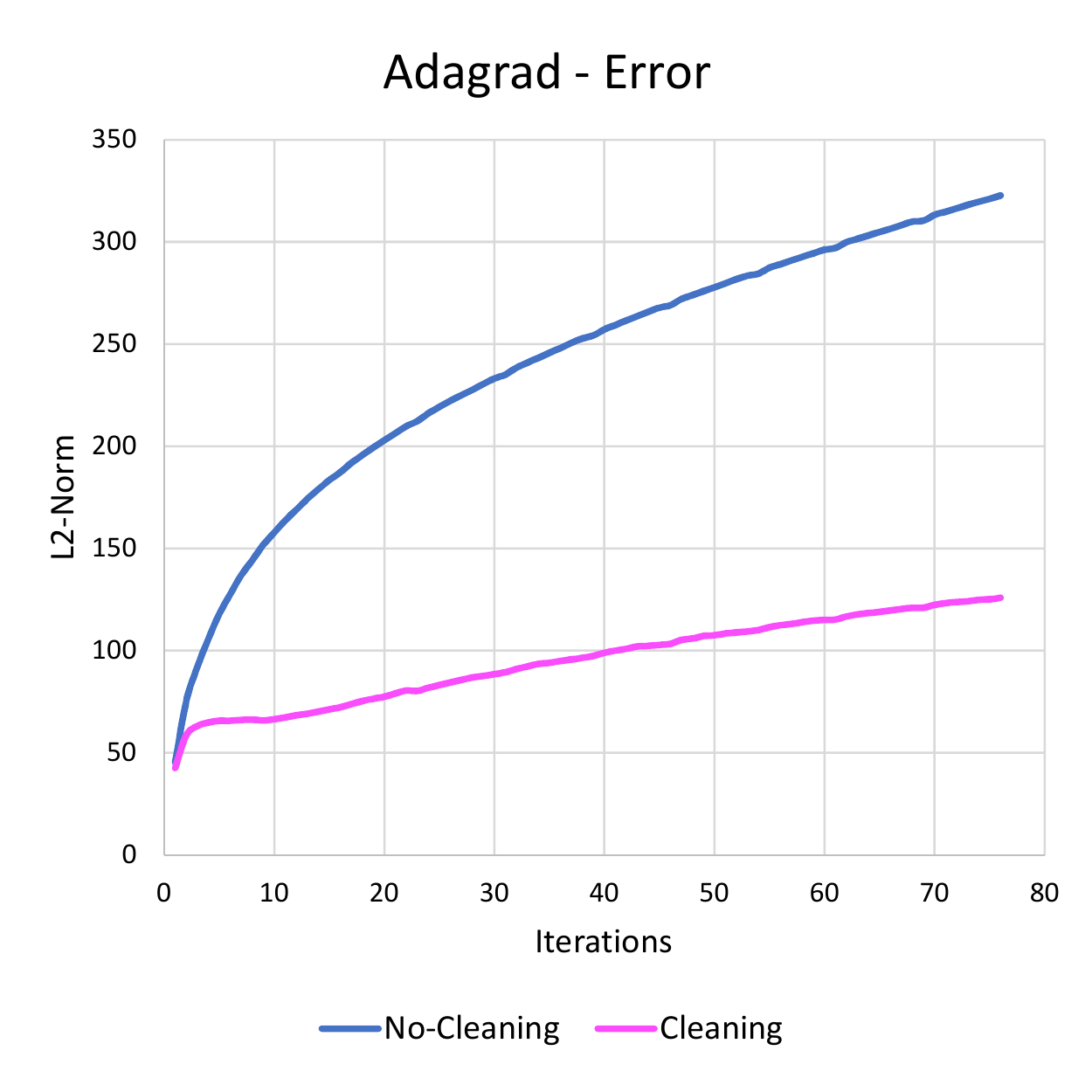}
}
\end{center}
\caption{The effect of cleaning on the Count-Min Sketch Tensor and its corresponding optimizer for the MegaFace dataset.}
\label{fig:cleaning}
\end{figure*}

\subsection{Large-Scale Language Model}
Since the Wikitext-103 and LM1B datasets have large vocabularies, we use Sampled Softmax \citep{jean2014sampled} to induce sparsity in the softmax layer and for faster training. Each Count-Sketch Tensor is $5 \times$ smaller than the original variable. Therefore, there are at least 15 collisions for each bin on average.

{\bf Adagrad - Wikitext-103:} Our language model is a single layer LSTM with 1024 hidden units. The dimensionality of the word embeddings is 256 and we use a projection layer between the LSTM and Softmax layers. The model is unrolled 35 steps BPTT. The model is regularized via Dropout with $p=0.25$. We train the model for 25 epochs with a mini-batch size of 1024. For the Adagrad optimizer, the gradient norm is clipped to 0.1, the learning rate starts at 0.4 and decays linearly to 0 during training.

{\bf Results:} For the Wikitext-103 dataset, we allocated a [3, 17,849, 256] Count-Sketch tensor for each auxiliary variable. By providing the Count-Sketch with more parameters, our method has notably better test accuracy than the NMF low-rank approximation while using only slightly more memory. In addition, despite using more parameters than the low-rank approximation, the count-sketch optimizer is still somewhat faster. Finally, the low-rank approximation fails to meet the same accuracy as the original baseline, while surprisingly the count-sketch optimizer has the best test perplexity.

\begin{table}
\caption{Test Perplexity, Running Time, and Memory Consumption on the Wikitext-103 dataset using the Adagrad Optimizer.  CS --- Count-Sketch, LR --- Low-Rank}
\begin{center}
    \begin{tabular}{ |r|r|r|r| } 
    \hline
     Metric & Adagrad & CS & LR-NMF \\
    \hline
    Time & {\bf 6.4} & 6.6 & 6.7 \\
    Size & 10,625 & 10,089 & {\bf 10,077} \\
    Test Perplexity & 57.63 & {\bf 56.07} & 58.27 \\
    \hline
    \end{tabular}
\end{center}
\label{fig:adagrad_perf}
\end{table}

{\bf Adam - LM1B:} For the 1-Billion Word dataset, our goal is to mimic multi-GPU distributed training on a single GPU. The original batch size is 128 with a learning rate of 5e-4. By increasing our batch size from 128 to 1024, we scale our learning rate linearly by $8\times$ \citep{goyal2017scaling}. In addition, we decay our learning rate linearly to zero over 5 training epochs. We double the LSTM size from 1024 to 2048, but keep the word embedding size at 256. The model is unrolled 20 steps BPTT. Dropout is kept nominally at $p=0.01$ and the gradient norm is clipped to 1. A surprising side effect of increasing the batch size was that we reduced our training time by roughly $2\times$ from 12.25 hours to 6.25 hours per epoch despite using a single GPU.

{\bf Results:} For the 1-Billion Word dataset, we allocated a [3, 52,898, 256] Count-Sketch tensor for each auxiliary variable. Our primary comparison is only with the 2nd moment because the NMF low-rank approximation is not applicable to the 1st moment. The count-sketch is slightly more accurate than the low-rank approximation. When both the 1st and 2nd moments are compressed with the count-sketch tensor, its accuracy is on-par with the low-rank approximation that compresses only the 2nd moment. In general, the count-sketch tensor is $8\%$ faster than the low-rank approach while using substantially less GPU memory. For large matrices, there is a noticeable cost with reconstructing the entire matrix to update only a sparse subset of values.

\begin{table} [ht]
\caption{Running Time and Memory Consumption on the 1-Billion Word dataset for the Adam optimizer.}
\begin{center}
    \begin{tabular}{ |r|r|r|r|r| } 
    \hline
     Metric & CS-MV & Adam & CS-V & LR-NMF-V \\
    \hline
    Time & 27.1 & {\bf 26.4} & 26.75 & 29.2 \\
    Size & {\bf 8,591} & 11,707 & 10,167 & 13,259 \\
    \hline
    \end{tabular}
\end{center}
\label{fig:perf}
\end{table}

\begin{table} [ht]
\caption{Convergence Rate (Test Perplexity) after 5 epochs on the 1-Billion Word dataset. The modifiers indicate which auxiliary variables are compressed for the Adam optimizer.}
\begin{center}
    \begin{tabular}{ |r|r|r|r|r| } 
    \hline
     Epoch & CS-MV & Adam & CS-V & LR-NMF-V \\
    \hline
    1 & 50.78 & 48.48 & 49.49 & 50.04 \\
    2 & 46.08 & 45.34 & 45.22 & 45.60 \\
    3 & 43.71 & 42.79 & 42.95 & 43.55 \\
    4 & 41.82 & 41.15 & 41.23 & 41.82 \\
    5 & 40.55 & 39.90 & {\bf 39.88} & 40.41 \\
    \hline
    \end{tabular}
\end{center}
\label{fig:convergence}
\end{table}

\subsection{Extreme Classification}
\label{sec:amz}
For the extremely large-scale classification task, we conducted our experiments on an Amazon recommendation dataset. The task is to predict an object out of over 49 million classes given a query. The text query is parsed into trigram features. Feature hashing is applied to convert the strings into integers. The input feature dimension is 80K. On average, there are on 30 non-zero features per query, so the input layer is very sparse and suitable for our Count-Sketch optimizer. We trained a single hidden layer, fully-connected neural network with an embedding dimension of 1024.

A traditional softmax classifier would require over 200 GB of memory, which is well beyond the memory capacity of the largest GPUs. Instead, we leverage a novel approach for extreme classification called Merged-Averaged Classifiers via Hashing (MACH) \citep{huang2018mach}. This algorithm randomly merges the output classes into a manageable number of coarse-grained, meta-classes via universal hashing. Several independent, fully-connected neural networks are trained to solve this meta-class classification task. Each meta-classifier is associated with a unique hash function that creates a distinct class mapping. At inference time, we recover the scores for the original classes by aggregating the meta-class scores assigned to the original output class. For this experiment, we used 20K meta-classes in the output layer of each meta-classifier. For high-accuracy models, we use 32 meta-classifiers. Each individual meta-classifier required 414 MB of memory for a total of 12.95 GB. Therefore, our ensemble MACH classifier used $15 \times$ less memory than a monolithic softmax classifier.

Since we are primarily interested in faster training times, we limit ourselves to 4 meta-classifiers in this experiment. For our baseline, each meta-classifier is trained using the Adam optimizer with a batch size of 750. Given these settings, a single meta-classifier takes 4 GB of GPU memory, allowing us to train 4 models in parallel on a single GPU. For maximum memory savings, we eliminate the 1st moment and use a count-min sketch tensor of size [3, 266, 1024] for the 2nd moment \textbf{(1\% of original size)}. By using the Adam Count-Sketch optimizer, we reduce the memory cost for each model from 4 GB to 2.6 GB \textbf{(45\% smaller)}. We take of advantage of this extra memory by increasing the batch size from 750 to 2600 \textbf{(3.5$\times$ larger)}. As a result, the running time per epoch decreased from 5.32 hours to 3.3 hours \textbf{(38\% faster)}.

We measure the accuracy of the MACH model using the Recall@100 metric on a test dataset containing 20K queries. First, we evaluate the meta-classifiers and aggregate their scores. Then, we check how often the target class appears within the top 100 scores generated by the classifier. A major bottleneck during evaluation is sorting the 49.5 million classes to find the top 100 scores. Since we are only comparing the model's relative performance and are interested in fast running times, we down-sample the scores from 49.5 million to 1 million. The class subset contains the target classes for all 20K test queries and a random sample of the remaining classes. Given 16 meta-classifiers, the Adam baseline has a 0.6881 recall, while the Count-Sketch optimizer achieves a 0.6889 recall.

\begin{table} [ht]
\caption{Extreme Classification --- A MACH ensemble with 4 meta-classifiers is trained on a single GPU using Adam and the Count-Sketch optimizer.}
\begin{center}
    \begin{tabular}{ |l|l|l|l| } 
    \hline
    Type & Batch Size & Epoch Time & Recall@100  \\
    \hline
    Adam & 750 & 5.32 & 0.4704\\
    CS-V & \textbf{2600} & \textbf{3.3} & \textbf{0.4789}\\
    \hline
    \end{tabular}
\end{center}
\label{fig:mach}
\end{table}



\section{Conclusion and Future Work}
In this paper, we present the concept of a count-sketch tensor to compress the auxiliary variables associated with popular first-order optimizers. The count-sketch tensor retains the constant-time update and query operations, while maintaining structured sparsity for high-speed vectorized operations. The count-sketch tensor can reduce the memory usage of large-scale models with minimal cost by taking advantage of the model's sparsity. 
Going forward, we are interested in compressing the auxiliary variables associated with the hidden layers without incurring any performance penalty. We hope to leverage recent ideas of adding sparsity to the hidden layers in order to increase the size of the model without increasing its computational cost \citep{spring2017lsh, shazeer2017moe, wen2017learning}. Structured sparsity in the hidden layers would mesh well with our current approach for the Embedding and Softmax layers.


\balance
\renewcommand*{\bibfont}{\normalsize}
\bibliography{bibliography}

\onecolumn
\appendix
\section{Appendix}

{\bf Count-Sketch Error Bound:} 
\citep{charikar2002finding} Let $\hat{x}_i$ be the Count-Sketch estimate of component $i$ from vector $x$. For any component $x_i$, with probability $1-\delta$, a Count-Min Sketch matrix with width $\Theta(\frac{1}{\epsilon^2_1})$ and depth $\Theta(\log(\frac{d}{\delta}))$ satisfies
\begin{equation}
x_i - \epsilon_1 \norm{x}_2 \le \hat{x}_i \le x_i + \epsilon_1 \norm{x}_2
\end{equation}
{\bf Count-Min Sketch Error Bound:} 
\citep{cormode2005improved} Let $\hat{x}_i$ be the Count-Min Sketch estimate of component $i$ from vector $x$. For any component $x_i$, with probability $1-\delta$, a Count-Min Sketch matrix with width $\Theta(\frac{1}{\epsilon_1})$ and depth $\Theta(\log(\frac{d}{\delta}))$ satisfies
\begin{equation}
x_i \le \hat{x}_i \le x_i + \epsilon_1 \norm{x}_1
\end{equation}
For stochastic non-convex optimization, we measure how the algorithm converges to a stationary point - $\norm{\nabla f(x_t)}^2 \le c$ for some constant $c$. Notation: batch size $b$, learning rate $\eta_t$, 2nd moment decay rate $\beta_2$, count-min sketch error rate $\epsilon_1$, count-min sketch failure probability $\delta$.
\textbf{Assumptions:} Here are the assumptions used in our analysis:
\begin{enumerate}
    \item Function $f$ is L-Smooth - There exists a constant $L$ such that $\norm{\nabla f(x) - \nabla f(y)} \le L\norm{x - y},~\forall x,y \in \mathbb{R}^d$
    \item Function $f$ has bounded gradients - $[f(x_t)]_i \le G_i,~\forall x \in \mathbb{R}^d, i \in [d],~~~\hat{G} = \max_{i} G_i$
    \item The stochastic gradient oracle provides us with an unbiased estimate with fixed variance. Let $\xi_t$ represents the randomness (due to mini-batch sampling) at iteration $t$. $$g_{t,i} = [\nabla f(x_t, \xi_t)]_i,~~~\mathbb{E} [g_{t,i}] = [\nabla f(x_t)]_i,~~~\mathbb{E} [(g_{t,i} - [\nabla f(x_t)]_i)^2] \le \sigma_i$$
\end{enumerate}
For simplicity and to save additional memory by not tracking the 1st moment, let $\beta_1=0$. In this form, the optimizer is commonly called RMSPROP. Therefore, the update rule for all $i \in [d]$ is 
\begin{equation}
x_{t+1,i} = x_{t,i} - \eta_t \frac{g_{t,i}}{\sqrt{\hat{v}_{t,i}} + \epsilon}, 
\end{equation}
where $\hat{v}_{t, i}$ represents the Count-Min Sketch estimate of component $i$ from vector $v_{t} = v_{t-1} + (1 - \beta_2)(v_{t-1} - g_t^2)$.
\begin{theorem} \label{eq:thm}
Let learning rate $\eta_t = \eta, \forall t \in [T]$ and batch size $b=1$. Assume $\beta_2$, $\eta$, and $\epsilon$ are selected such that $\eta \le \frac{\epsilon}{2L}$ and $\sqrt{1-\beta_2} \le \frac{\epsilon}{4\hat{G}}$. Given a Count-Min Sketch matrix width $\Theta(\frac{1}{\epsilon_1})$ and depth $\Theta(\log(\frac{dT}{\delta}))$, we have the following bound that holds for Count-Min Sketch Adam with probability $(1-\delta)$  where $M = (\frac{\hat{G} \sqrt{1-\beta_2}}{\epsilon^2} \sum^{d}_{i=1} \norm{G}^2_2)$:
\begin{align*}
\min_{t} \mathbb{E} \Bigg[ \norm{\nabla f(x_t)}^2 \Bigg]
\le
\mathcal{O} \Bigg ( \frac{f(x_0) - f(x_*)}{\eta T} + \sigma^2 + \epsilon_1 M \Bigg ) 
\end{align*}
\end{theorem}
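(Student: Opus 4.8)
The plan is to adapt the non-convex RMSPROP convergence analysis of \citep{yogi_nips_2018} to the setting where the second-moment preconditioner is read out of a Count-Min Sketch rather than stored exactly, arranging the argument so that the additive term $\epsilon_1 M$ arises entirely from the sketch's overestimation error while the $\frac{f(x_0)-f(x_*)}{\eta T}+\sigma^2$ part is inherited from the exact analysis. First I would fix the high-probability event on which everything runs. The Count-Min Sketch error bound guarantees, for a single coordinate at a single iteration, that $v_{t,i}\le\hat v_{t,i}\le v_{t,i}+\epsilon_1\norm{v_t}_1$ with failure probability at most $\delta/(dT)$ once the depth is $\Theta(\log(dT/\delta))$. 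A union bound over all $i\in[d]$ and all $t\in[T]$ then makes this two-sided inequality hold simultaneously everywhere with probability $1-\delta$, and I condition on this event for the remainder of the proof. The one-sided nature $\hat v_{t,i}\ge v_{t,i}$ (Count-Min only overestimates) is convenient: the effective step sizes $\eta/(\sqrt{\hat v_{t,i}}+\epsilon)$ are never larger than their exact counterparts.

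Next I would write the per-step descent inequality from $L$-smoothness and take expectation over the mini-batch randomness $\xi_t$. The curvature term $\frac{L\eta^2}{2}\sum_i g_{t,i}^2/(\sqrt{\hat v_{t,i}}+\epsilon)^2$ is harmless: since $\sqrt{\hat v_{t,i}}+\epsilon\ge\epsilon$ and the sketch only shrinks the steps, it is dominated by its unsketched counterpart and, after invoking the bounded-variance and bounded-gradient assumptions, contributes the $\sigma^2$ and $L$-dependent pieces exactly as in the exact proof. The difficulty sits in the first-order term, since $\hat v_{t,i}$ is built from $g_t^2$ and additionally carries the sketch randomness, so the preconditioner is correlated with the very gradient it multiplies. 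Following the Yogi template I would decouple them through the filtration-measurable reference $\frac{1}{\sqrt{v_{t-1,i}}+\epsilon}$ via the splitting
\begin{equation*}
\frac{1}{\sqrt{\hat v_{t,i}}+\epsilon}=\frac{1}{\sqrt{v_{t-1,i}}+\epsilon}+\Big(\frac{1}{\sqrt{v_{t,i}}+\epsilon}-\frac{1}{\sqrt{v_{t-1,i}}+\epsilon}\Big)+\Big(\frac{1}{\sqrt{\hat v_{t,i}}+\epsilon}-\frac{1}{\sqrt{v_{t,i}}+\epsilon}\Big),
\end{equation*}
so that the leading contribution becomes $[\nabla f(x_t)]_i^2$ times a deterministic preconditioner (using $\E g_{t,i}=[\nabla f(x_t)]_i$), and the two parenthesized pieces are errors to be absorbed.

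The middle piece is the exact one-step EMA gap, controlled as in \citep{yogi_nips_2018} by bounding $\abs{\sqrt{v_{t,i}}-\sqrt{v_{t-1,i}}}$ in terms of $\sqrt{1-\beta_2}\,\abs{g_{t,i}}$ and using the assumption $\sqrt{1-\beta_2}\le\frac{\epsilon}{4\hat G}$ together with denominators bounded below by $\epsilon^2$; this is where the $\sqrt{1-\beta_2}$ and $\hat G$ factors are born. The genuinely new piece is the sketch-vs-exact gap, governed by the conditioned bound
\begin{equation*}
0\le\hat v_{t,i}-v_{t,i}\le\epsilon_1\norm{v_t}_1.
\end{equation*}
I would propagate this through the square root and the $1/\epsilon^2$ lower bound on the product of denominators, and control $\norm{v_t}_1$ using the bounded-gradient assumption: each $v_{t,i}$ is an exponential average of $g_{s,i}^2\le G_i^2$, so $\norm{v_t}_1$ is of order $\norm{G}_2^2$. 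Combining the $\sqrt{1-\beta_2}/\epsilon^2$ and $\hat G$ factors from the EMA control with this bound reproduces precisely the quantity $M=\frac{\hat G\sqrt{1-\beta_2}}{\epsilon^2}\sum_i\norm{G}_2^2$, and the leftover $\epsilon_1$ multiplies it.

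Finally I would telescope the descent inequality over $t=0,\dots,T-1$, cancel the successive $f(x_{t+1})-f(x_t)$ differences, divide by $\eta T$, and use $f(x_T)\ge f(x_*)$; this yields the stated order bound on $\frac1T\sum_t\E\norm{\nabla f(x_t)}^2$, and $\min_t\E\norm{\nabla f(x_t)}^2$ is no larger than the average. The step I expect to be the main obstacle is exactly the first-order term: keeping the sketch's overestimation from eroding the descent guarantee, verifying that the reference-preconditioner decoupling still leaves a strictly positive coefficient on $\norm{\nabla f(x_t)}^2$ (this is where $\eta\le\frac{\epsilon}{2L}$ and $\sqrt{1-\beta_2}\le\frac{\epsilon}{4\hat G}$ are consumed), and routing the preconditioner perturbation cleanly into the single additive term $\epsilon_1 M$ rather than letting it contaminate the $1/(\eta T)$ rate or degrade the $\epsilon_1$ dependence.
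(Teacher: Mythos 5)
Your high-level template is the same as the paper's (high-probability CMS event via a union bound over all $dT$ queries, $L$-smoothness descent, decoupling the preconditioner from $g_t$ through a filtration-measurable reference, telescoping), but your specific three-way splitting has a genuine gap, and it sits exactly where you predicted the main obstacle would be: the sketch-vs-exact piece cannot be made to ``reproduce precisely the quantity $M$.'' Write that piece out:
\[
\abs{g_{t,i}}\cdot\Bigg|\frac{1}{\sqrt{\hat{v}_{t,i}}+\epsilon}-\frac{1}{\sqrt{v_{t,i}}+\epsilon}\Bigg|
=\frac{\abs{g_{t,i}}}{(\sqrt{\hat{v}_{t,i}}+\epsilon)(\sqrt{v_{t,i}}+\epsilon)}\cdot\frac{\hat{v}_{t,i}-v_{t,i}}{\sqrt{\hat{v}_{t,i}}+\sqrt{v_{t,i}}}.
\]
The numerator $\hat{v}_{t,i}-v_{t,i}\le\epsilon_1\norm{v_t}_1$ is a \emph{cumulative} collision mass --- an exponential average of $\epsilon_1\norm{g_s^2}_1$ over all past steps $s$ --- so it carries no factor of $(1-\beta_2)$: the geometric weights sum to one, and the bound $\norm{v_t}_1\le\norm{G}_2^2$ is $\beta_2$-independent. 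You then have only two ways to absorb the loose $\abs{g_{t,i}}$: (i) pair it with $\sqrt{\hat{v}_{t,i}}+\sqrt{v_{t,i}}\ge 2\sqrt{1-\beta_2}\,\abs{g_{t,i}}$, which costs a factor $1/(2\sqrt{1-\beta_2})$ and yields an additive term of order $\epsilon_1\frac{\hat{G}}{\sqrt{1-\beta_2}\,\epsilon^2}\sum_i\norm{G}_2^2=\epsilon_1 M/(1-\beta_2)$; or (ii) bound $\abs{g_{t,i}}\le\hat{G}$ and use $(\hat{v}_{t,i}-v_{t,i})/(\sqrt{\hat{v}_{t,i}}+\sqrt{v_{t,i}})\le\sqrt{\hat{v}_{t,i}-v_{t,i}}$, which degrades the dependence from $\epsilon_1$ to $\sqrt{\epsilon_1}$. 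Either way you prove a strictly weaker statement than the theorem: the claimed $M$ has $\sqrt{1-\beta_2}$ in the \emph{numerator}, and $\epsilon_1 M/(1-\beta_2)$ blows up rather than vanishes as $\beta_2\to 1$, so the discrepancy is not absorbed by the $\mathcal{O}(\cdot)$. The $\sqrt{1-\beta_2}$ you want is produced only by your middle (EMA) piece; it cannot be ``combined'' with the third piece, which enters additively.

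The paper avoids this by never comparing against the exact $v_t$ at all. Its reference preconditioner is $1/(\sqrt{\beta_2\hat{v}_{t-1,i}}+\epsilon)$ --- the \emph{previous sketched} value --- so the only error term is the difference of consecutive sketched quantities. By linearity of the sketch, $\hat{v}_{t,i}-\beta_2\hat{v}_{t-1,i}$ is itself a sketched estimate of the one-step increment $(1-\beta_2)g_t^2$, hence bounded by $(1-\beta_2)(g_{t,i}^2+\epsilon_1\norm{g_t^2}_1)$ (this is Lemma~\ref{lem:A2}): the collision error inherits the $(1-\beta_2)$ prefactor, and pairing with $\abs{g_{t,i}}/(\sqrt{\hat{v}_{t,i}}+\sqrt{\hat{v}_{t-1,i}})\le 1/\sqrt{1-\beta_2}$ leaves the net $\sqrt{1-\beta_2}\,\epsilon_1$ that appears in $M$. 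The cumulative overestimate $\hat{v}_{t-1,i}\le v_{t-1,i}+\epsilon_1\norm{v_{t-1}}_1$ is still used, but only to lower-bound the descent coefficient, where it costs a harmless multiplicative constant $2(\sqrt{\beta_2\epsilon_1 d}\,\hat{G}+\epsilon)$ on the final bound rather than an additive error. To repair your argument, replace your reference $1/(\sqrt{v_{t-1,i}}+\epsilon)$ by $1/(\sqrt{\beta_2\hat{v}_{t-1,i}}+\epsilon)$ and route all sketch error through the one-step sketched increment; the rest of your outline then goes through as you describe.
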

\begin{proof}
Given that the function is $L$-smooth and by the optimizer update rule, we derive the following:
\begin{equation}
\begin{split}
f(x_{t+1}) &= f(x_t) + \langle \nabla f(x_t), x_{t+1} - x_t \rangle + \frac{L}{2} \norm{x_{t+1} - x_t}^2 \\
&= f(x_t) 
- \eta_t \sum^{d}_{i=1} \Bigg( [\nabla f(x_t)]_i \cdot \frac{g_{t,i}}{\sqrt{\hat{v}_{t,i}} + \epsilon} \Bigg) + \frac{L \eta^2_t}{2}  \sum^{d}_{i=1} \frac{g^2_{t,i}}{(\sqrt{\hat{v}_{t,i}} + \epsilon)^2}
\end{split}
\end{equation}
Next, we take the expectation of $f(x_{t+1})$, given we that know $x_t$ (assumed fixed):
\begin{align*}
\mathbb{E}_t\left[f(x_{t+1})~|~x_t\right] &\le
f(x_t) 
- \eta_t \sum^{d}_{i=1} \Bigg( [\nabla f(x_t)]_i \cdot \mathbb{E}_t \Bigg[\frac{g_{t,i}}{\sqrt{\hat{v}_{t,i}} + \epsilon}~\Big|~x_t \Bigg] \Bigg) + \frac{L \eta^2_t}{2}  \sum^{d}_{i=1} \mathbb{E}_t \Bigg[\frac{g^2_{t,i}}{(\sqrt{\hat{v}_{t,i}} + \epsilon)^2}~\Big|~x_t\Bigg] \\
&=
f(x_t) 
-\eta_t \sum^{d}_{i=1} \Bigg( [\nabla f(x_t)]_i \cdot \mathbb{E}_t \Bigg[ \frac{g_{t,i}}{\sqrt{\hat{v}_{t,i}} + \epsilon} - \frac{g_{t,i}}{\sqrt{\beta_2 \hat{v}_{t-1,i}} + \epsilon} + \frac{g_{t,i}}{\sqrt{\beta_2 \hat{v}_{t-1,i}} + \epsilon} ~\Big|~x_t\Bigg] \Bigg) \\
&\quad \quad \quad ~~+ \frac{L \eta^2_t}{2}  \sum^{d}_{i=1} \mathbb{E}_t \Bigg[\frac{g^2_{t,i}}{(\sqrt{\hat{v}_{t,i}} + \epsilon)^2} ~\Big|~x_t\Bigg] \\
&=
f(x_t) 
-\eta_t \sum^{d}_{i=1} \Bigg( [\nabla f(x_t)]_i \cdot \Bigg[ \frac{[\nabla f(x_t)]_i}{\sqrt{\beta_2 \hat{v}_{t-1,i}} + \epsilon}
+ \mathbb{E}_t \Bigg[ \frac{g_{t,i}}{\sqrt{\beta_2 \hat{v}_{t,i}} + \epsilon} - \frac{g_{t,i}}{\sqrt{\beta_2 \hat{v}_{t-1,i}} + \epsilon} ~\Big|~x_t\Bigg] \Bigg] \Bigg) \\
&\quad \quad \quad ~~+ \frac{L \eta^2_t}{2}  \sum^{d}_{i=1} \mathbb{E}_t \Bigg[\frac{g^2_{t,i}}{(\sqrt{\hat{v}_{t,i}} + \epsilon)^2} ~\Big|~x_t\Bigg] \\
&\le
f(x_t) 
-\eta_t \sum^{d}_{i=1} \frac{[\nabla f(x_t)]^2_i}{\sqrt{\beta_2 \hat{v}_{t-1,i}} + \epsilon}
+\eta_t \sum^{d}_{i=1}  \Big | [\nabla f(x_t)]_i \Big | \cdot \Bigg| \mathbb{E}_t \Bigg[ \underbrace{\frac{g_{t,i}}{\sqrt{\beta_2 \hat{v}_{t,i}} + \epsilon} - \frac{g_{t,i}}{\sqrt{\beta_2 \hat{v}_{t-1,i}} + \epsilon}}_{T_1} ~\Big|~x_t\Bigg] \Bigg| \\
&\quad \quad \quad ~~+ \frac{L \eta^2_t}{2}  \sum^{d}_{i=1} \mathbb{E}_t \Bigg[\frac{g^2_{t,i}}{(\sqrt{\hat{v}_{t,i}} + \epsilon)^2} ~\Big|~x_t\Bigg]
\end{align*}
The second equality occurs because $g_{t,i}$ is an unbiased estimate of $[\nabla f(x_t)]_i$. 
Now, we upper-bound the term $T_1$:
\begin{align*}
T_1 &= \frac{g_{t,i}}{\sqrt{\hat{v}_{t,i}} + \epsilon} - \frac{g_{t,i}}{\sqrt{\beta_2 \hat{v}_{t-1,i}} + \epsilon}
\\
&\le \abs{g_{t,i}} \cdot \Bigg| \frac{1}{\sqrt{\hat{v}_{t,i}} + \epsilon} - \frac{1}{\sqrt{\beta_2 \hat{v}_{t-1,i}} + \epsilon} \Bigg|
\\
&= \frac{\abs{g_{t,i}}}{(\sqrt{\hat{v}_{t,i}} + \epsilon) (\sqrt{\beta_2 \hat{v}_{t-1,i}} + \epsilon)} \cdot
\Bigg| \frac{\hat{v}_{t,i} - \beta_2 \hat{v}_{t-1,i}}{\sqrt{\hat{v}_{t,i}} + \sqrt{\beta_2 \hat{v}_{t-1,i}}} \Bigg|
\\
&= \frac{\abs{g_{t,i}}}{(\sqrt{\hat{v}_{t,i}} + \epsilon) (\sqrt{\beta_2 \hat{v}_{t-1,i}} + \epsilon)} \cdot
\Bigg| \frac{(1 - \beta_2)\hat{g}_{t,i}^2}{\sqrt{\hat{v}_{t,i}} + \sqrt{\beta_2 \hat{v}_{t-1,i}}} \Bigg|
\\
&\le \frac{\abs{g_{t,i}}}{(\sqrt{\hat{v}_{t,i}} + \epsilon) (\sqrt{\beta_2 \hat{v}_{t-1,i}} + \epsilon)} \cdot
\Bigg| \frac{(1-\beta_2)(g^2_{t,i} + \epsilon_1\norm{g^2_t}_1)}{\sqrt{\hat{v}_{t,i}} + \sqrt{\hat{v}_{t-1,i}}} \Bigg|
\\
&\le \frac{\sqrt{1-\beta_2}(g^2_{t,i} + \epsilon_1\norm{g^2_t}_1)}{ (\sqrt{\beta_2 \hat{v}_{t-1,i}} + \epsilon) \epsilon}
\end{align*}
From Lemma \ref{lem:A3}, we have the second equality. The second inequality occurs because of Lemma \ref{lem:A2}, which is derived using the Count-Min Sketch error bound. The third inequality occurs because $\frac{\abs{g_{t,i}}}{\sqrt{\hat{v}_{t,i}} + \sqrt{\hat{v}_{t-1,i}}} \le \frac{1}{\sqrt{1 - \beta_2}}$ and when we drop $\sqrt{\hat{v}_{t,i}}$ from $(\sqrt{\hat{v}_{t,i}} + \epsilon)$.

By substituting the upper-bound for $T_1$, we arrive at the following:
\begin{align*}
\mathbb{E}_t\left[f(x_{t+1})~|~x_t\right] &\le
f(x_t) 
-\eta_t \sum^{d}_{i=1} \frac{[\nabla f(x_t)]^2_i}{\sqrt{\beta_2 \hat{v}_{t-1,i}} + \epsilon}
+ \frac{\eta_t \hat{G} \sqrt{1-\beta_2}}{\epsilon} \sum^{d}_{i=1}  \Bigg( \mathbb{E}_t \Bigg[ \frac{ g^2_{t,i} + \epsilon_1\norm{g^2_t}_1 }{ \sqrt{\beta_2 \hat{v}_{t-1,i}} + \epsilon)} ~\Big|~x_t\Bigg] \Bigg)
\\
&\quad \quad \quad ~~+ \frac{L \eta^2_t}{2 \epsilon}  \sum^{d}_{i=1} \mathbb{E}_t \Bigg[\frac{g^2_{t,i}}{\sqrt{\hat{v}_{t,i}} + \epsilon}~\Big|~x_t \Bigg]
\\
&\le
f(x_t) 
-\eta_t \sum^{d}_{i=1} \frac{[\nabla f(x_t)]^2_i}{\sqrt{\beta_2 \hat{v}_{t-1,i}} + \epsilon} \\
&\quad \quad \quad ~~+ \frac{\eta_t \hat{G} \sqrt{1-\beta_2}}{\epsilon} \sum^{d}_{i=1}  \Bigg( \mathbb{E}_t \Bigg[ \frac{ g^2_{t,i} } { \sqrt{\beta_2 \hat{v}_{t-1,i}} + \epsilon)}~\Big|~x_t \Bigg] + \mathbb{E}_t \Bigg[ \frac{ \epsilon_1\norm{g^2_t}_1 }{ \sqrt{\beta_2 \hat{v}_{t-1,i}} + \epsilon)}~\Big|~x_t \Bigg] \Bigg) \\
&\quad \quad \quad ~~+ \frac{L \eta^2_t}{2 \epsilon} \sum^{d}_{i=1} \mathbb{E}_t \Bigg[\frac{g^2_{t,i}}{\sqrt{\beta_2  \hat{v}_{t-1,i}} + \epsilon} ~\Big|~x_t\Bigg]
\\
&\le
f(x_t)
- \Bigg (\eta_t - \frac{\eta_t \hat{G} \sqrt{1-\beta_2}}{\epsilon} - \frac{L \eta_t^2}{2 \epsilon} \Bigg) \sum^{d}_{i=1} \frac{[\nabla f(x_t)]^2_i}{\sqrt{\beta_2 \hat{v}_{t-1,i}} + \epsilon} \\
&\quad \quad \quad ~~+ \Bigg (\frac{\eta_t \hat{G} \sqrt{1-\beta_2}}{\epsilon} + \frac{L \eta_t^2}{2 \epsilon} \Bigg) \sum^{d}_{i=1} \frac{\sigma^2_i}{b (\sqrt{\beta_2  \hat{v}_{t-1,i}} + \epsilon)}
\\
&\quad \quad \quad ~~+  \frac{\eta_t \hat{G} \sqrt{1-\beta_2}}{\epsilon} \sum^{d}_{i=1} \frac{\epsilon_1 \left(\sum^{d}_{j=1} \frac{\sigma^2_j}{b} + [\nabla f(x_t)]^2_j\right)}{\sqrt{\beta_2 \hat{v}_{t-1,i}} + \epsilon}
\\
&\le
f(x_t)
- \Bigg (\eta_t - \frac{\eta_t \hat{G} \sqrt{1-\beta_2}}{\epsilon} - \frac{L \eta_t^2}{2 \epsilon} \Bigg) \sum^{d}_{i=1} \frac{[\nabla f(x_t)]^2_i}{\sqrt{\beta_2 \hat{v}_{t-1,i}} + \epsilon} \\
&\quad \quad \quad ~~+ \Bigg (\frac{\eta_t \hat{G} \sqrt{1-\beta_2}}{\epsilon} + \frac{L \eta_t^2}{2 \epsilon} \Bigg) \sum^{d}_{i=1} \frac{\sigma^2_i}{b (\sqrt{\beta_2 \hat{v}_{t-1,i}} + \epsilon)}
+  \frac{\eta_t \hat{G} \sqrt{1-\beta_2}}{\epsilon} \sum^{d}_{i=1} \frac{\epsilon_1 \left(\frac{\sigma^2}{b} + \norm{G}^2_2\right)}{\sqrt{\beta_2 \hat{v}_{t-1,i}} + \epsilon}
\end{align*}
The first inequality follows because the function has bounded gradients - $[\nabla f(x_t)]_i \le \hat{G}$. Now, the second inequality holds because $\hat{v}_{t,i} \ge \beta_2 \hat{v}_{t-1,i}$. In addition, we split  the $g^2_{t,i}$ and $\epsilon_1 \norm{g^2_t}_1$ terms using the linearity of expectation. For the third inequality, we use the result and definitions in Lemma \ref{lem:A1}. From the specified parameters for $\eta_t$, $\beta_2$, and $\epsilon$, we assume the following conditions hold: $\frac{\hat{G} \sqrt{1-\beta_2}}{\epsilon} \le \frac{1}{4}$ and $\frac{L \eta_t}{2 \epsilon} \le \frac{1}{4}$.

\vspace{-0.1in}

\begin{align*}
\mathbb{E}_t\left[f(x_{t+1})~|~x_t\right] &\le
f(x_t)
- \frac{\eta_t}{2} \sum^{d}_{i=1} \frac{[\nabla f(x_t)]^2_i}{\sqrt{\beta_2 \hat{v}_{t-1,i}} + \epsilon} 
+ \Bigg (\frac{\eta_t \hat{G} \sqrt{1-\beta_2}}{\epsilon} + \frac{L \eta_t^2}{2 \epsilon} \Bigg) \sum^{d}_{i=1} \frac{\sigma_i^2}{b (\sqrt{\beta_2 \hat{v}_{t-1,i}} + \epsilon)} \\
&\quad \quad \quad ~~+ \frac{\eta_t \hat{G} \sqrt{1-\beta_2}}{\epsilon} \sum^{d}_{i=1} \frac{\epsilon_1 \left(\frac{\sigma^2}{b} + \norm{G}^2_2\right)}{\sqrt{\beta_2 \hat{v}_{t-1,i}} + \epsilon}
\\
&\le
f(x_t)
- \frac{\eta_t}{2 (\sqrt{\beta_2 \epsilon_1 d} \hat{G} + \epsilon)} \norm{\nabla f(x_t)}^2
+ \Bigg (\frac{\eta_t \hat{G} \sqrt{1-\beta_2}}{\epsilon^2} + \frac{L \eta_t^2}{2 \epsilon^2} \Bigg) \frac{\sigma^2}{b} \\
&\quad \quad \quad ~~+  \frac{\eta_t \hat{G} \sqrt{1-\beta_2}}{\epsilon^2} \left( \frac{\epsilon_1 d\sigma^2}{b} + \epsilon_1 \sum^{d}_{i=1} \norm{G}^2_2 \right)
\\
&= 
f(x_t)
- \frac{\eta_t}{2 (\sqrt{\beta_2 \epsilon_1 d}\hat{G} + \epsilon)} \norm{\nabla f(x_t)}^2
+ \Bigg (\frac{\eta_t \hat{G} \sqrt{1-\beta_2}}{\epsilon^2} (1 + \epsilon_1 d) + \frac{L \eta_t^2}{2 \epsilon^2} \Bigg) \frac{\sigma^2}{b} \\
&\quad \quad \quad ~~+ \epsilon_1\left( \frac{\eta_t \hat{G} \sqrt{1-\beta_2}}{\epsilon^2} \sum^{d}_{i=1} \norm{G}^2_2\right)
\end{align*}

For the standard optimizer, $0 \le v_{t-1,i} \le \hat{G}^2$. For the Count-Min Sketch approximation, $\norm{v_{t-1}}_1 = \sum^d_{i=1} \abs{v_{t-1,i}} \le \sum^d_{i=1} \hat{G}^2 = d \hat{G}^2$. Therefore, this inequality holds $0 \le v_{t-1,i} \le \hat{v}_{t-1,i} \le v_{t-1,i} + \epsilon_1 \norm{v_{t-1}}_1 \le \epsilon_1 d \hat{G}^2$. In addition, this corollary follows $\frac{1}{\beta_2 \sqrt{\epsilon_1 d}\hat{G} + \epsilon} \le \frac{1}{\epsilon}$. The second inequality follows given the two inequalities for the Count-Min Sketch approximation.

Now, we take a telescoping sum over all the iterations, and taking the full expectation:
\begin{align*}
\frac{\eta}{2 \left(\sqrt{\beta_2 \epsilon_1 d}\hat{G} + \epsilon\right)} &\sum^T_{t=0} \mathbb{E} \Bigg[ \norm{\nabla f(x_t)}^2 \Bigg] \\
& \le
f(x_0) - \mathbb{E}[f(x_{T+1})] + \Bigg (\frac{\eta \hat{G} \sqrt{1-\beta_2}}{\epsilon^2} (1 + \epsilon_1 d) + \frac{L \eta^2}{2 \epsilon^2} \Bigg) \frac{T \sigma^2}{b}
+ \epsilon_1 T \Bigg( \frac{\eta \hat{G} \sqrt{1-\beta_2}}{\epsilon^2}  \sum^{d}_{i=1} \norm{G}^2_2 \Bigg)
\end{align*}

Finally, given that $f(x_*) \le f(x_{T+1})$ and by multiplying the equation with $\frac{2 (\sqrt{\beta_2 \epsilon_1 d}\hat{G} + \epsilon)}{\eta T}$, we arrive at our final result.
\begin{align*}
\frac{1}{T} &\sum^T_{t=0} \mathbb{E} \Bigg[ \norm{\nabla f(x_t)}^2 \Bigg] \\
& \le
2 \left(\sqrt{\beta_2 \epsilon_1 d}\hat{G} + \epsilon\right) \cdot \left[
\frac{f(x_0) - f(x_*)}{\eta T}
+ \Bigg (\frac{\hat{G} \sqrt{1-\beta_2}}{\epsilon^2} (1 + \epsilon_1 d) + \frac{L \eta}{2 \epsilon^2} \Bigg) \frac{\sigma^2}{b}
+ \epsilon_1 \Bigg(\frac{\hat{G} \sqrt{1-\beta_2}}{\epsilon^2} \sum^{d}_{i=1} \norm{G}^2_2\Bigg)
\right]
\end{align*}
\end{proof}

\begin{lemma}{\label{lem:A1}}
\citep{yogi_nips_2018} For all $i \in [d]$ and for the iterates $x_t$ where $t \in [T]$ for Count-Min Sketch Adam, the following inequality holds: $$ \mathbb{E}[ g^2_{t,i} ] \le \frac{\sigma^2_i}{b} + [\nabla f(x_t)]_i $$
\end{lemma}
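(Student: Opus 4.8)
The plan is to prove the bound by the elementary bias--variance decomposition of the second moment. I would condition on the iterate $x_t$, so that the only randomness is the mini-batch draw $\xi_t$. For any coordinate $i$, the decomposition
\[
\mathbb{E}[g_{t,i}^2] = \Var(g_{t,i}) + \left(\mathbb{E}[g_{t,i}]\right)^2
\]
splits the target into a variance term and a squared-mean term, each of which is pinned down by exactly one of the stated stochastic-oracle assumptions.

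First I would handle the squared-mean term using unbiasedness: since $\mathbb{E}[g_{t,i}] = [\nabla f(x_t)]_i$, that term is exactly $[\nabla f(x_t)]_i^2$. For the variance term, I would write the mini-batch gradient as the empirical average $g_{t,i} = \frac{1}{b} \sum_{k=1}^{b} g_{t,i}^{(k)}$ of $b$ per-example stochastic gradients, each unbiased with variance at most $\sigma_i^2$. Because the $b$ samples are drawn independently, the cross-covariance terms vanish, so $\Var\bigl(\sum_k g_{t,i}^{(k)}\bigr) = \sum_k \Var\bigl(g_{t,i}^{(k)}\bigr) \le b\,\sigma_i^2$, and dividing by $b^2$ gives $\Var(g_{t,i}) \le \sigma_i^2/b$. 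Substituting both pieces back yields
\[
\mathbb{E}[g_{t,i}^2] \le \frac{\sigma_i^2}{b} + [\nabla f(x_t)]_i^2,
\]
which is the claimed inequality (carrying the square on the gradient term that the downstream use in the main proof requires).

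The only substantive point --- as opposed to the mechanical decomposition --- is justifying the $1/b$ variance reduction, which relies on the per-example gradients within a mini-batch being i.i.d.\ (or at least pairwise uncorrelated). Once that independence is granted, the lemma follows in two lines from the definition of variance together with the unbiasedness and bounded-variance assumptions, with no appeal to $L$-smoothness or to the Count-Min Sketch error bounds. I would close by noting that the theorem specializes this to $b=1$, recovering $\mathbb{E}[g_{t,i}^2] \le \sigma_i^2 + [\nabla f(x_t)]_i^2$, which is the form invoked when the third inequality of the main derivation is applied.
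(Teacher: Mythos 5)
Your proof is correct, but note that the paper itself never proves this lemma --- it is stated with a citation and imported wholesale from \citep{yogi_nips_2018}, so there is no in-paper argument to compare against. Your bias--variance decomposition $\mathbb{E}[g_{t,i}^2] = \Var(g_{t,i}) + \left(\mathbb{E}[g_{t,i}]\right)^2$, combined with unbiasedness and the $1/b$ variance reduction for an average of $b$ independent per-example gradients, is exactly the standard derivation behind the cited result. Two points in your write-up are worth emphasizing. First, you correctly flag that the $1/b$ factor requires the per-example gradients within the mini-batch to be i.i.d.\ (or at least pairwise uncorrelated); the paper's Assumption 3 bounds the variance of $g_{t,i}$ itself and never introduces this mini-batch structure, so under a literal reading of the assumptions the lemma is only justified for $b=1$ --- which is precisely what the theorem assumes, so nothing downstream is harmed. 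Second, your observation that the right-hand side should be $[\nabla f(x_t)]_i^2$ rather than $[\nabla f(x_t)]_i$ is correct and important: as printed, the lemma drops the square (a first moment can be negative, so the stated bound cannot hold in general), and the squared form is what the main proof actually invokes at its third inequality, where the quantity $\frac{\sigma^2_j}{b} + [\nabla f(x_t)]^2_j$ appears. So your proof both fills a gap the paper delegates to a citation and silently repairs the statement to the form that the surrounding argument needs.
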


\begin{lemma}{\label{lem:A2}}
For all $i \in [d]$ and for the iterates $x_t$ where $t \in [T]$ for Count-Min Sketch Adam, the following inequality holds: $$\hat{v}_{t,i} - \beta_2 \hat{v}_{t,i}\le (1-\beta_2) (g^2_{t,i} + \norm{g_t}_1)$$
\end{lemma}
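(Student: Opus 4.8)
The plan is to reduce the claim to a single application of the Count-Min Sketch error bound, exploiting the linearity of the sketch. The starting point is that the second-moment accumulator obeys the linear recursion $v_t = \beta_2 v_{t-1} + (1-\beta_2) g_t^2$, equivalently $v_t - \beta_2 v_{t-1} = (1-\beta_2) g_t^2$, so at each step the sketch absorbs the non-negative increment $(1-\beta_2) g_t^2$. Because the Count-Min Sketch is linear in the stream of updates, the sketched increment is exactly the sketch of $(1-\beta_2) g_t^2$; this is precisely the identity supplied by Lemma~\ref{lem:A3}, namely $\hat{v}_{t,i} - \beta_2 \hat{v}_{t-1,i} = (1-\beta_2)\hat{g}_{t,i}^2$, where $\hat{g}_{t,i}^2$ denotes the Count-Min Sketch estimate of the squared-gradient coordinate $g_{t,i}^2$.

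First I would invoke this identity to rewrite the left-hand side entirely in terms of $\hat{g}_{t,i}^2$, pulling out the non-negative factor $(1-\beta_2)$. Second, I would apply the Count-Min Sketch error bound from the preliminaries to the vector $g_t^2$: since every coordinate $g_{t,j}^2$ is non-negative, the sketch only overestimates, and the one-sided guarantee yields $\hat{g}_{t,i}^2 \le g_{t,i}^2 + \epsilon_1 \norm{g^2_t}_1$ with probability $1-\delta$ for a sketch of width $\Theta(1/\epsilon_1)$ and depth $\Theta(\log(dT/\delta))$. Third, multiplying this coordinatewise bound by $(1-\beta_2) \ge 0$ and substituting back through the identity of the first step recovers exactly the stated inequality.

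The step I expect to carry the real weight is the non-negativity argument that licenses the Count-Min Sketch bound. The entire reduction hinges on the increment stream being genuinely non-negative, so that the asymmetric overestimate guarantee $\hat{x}_i \le x_i + \epsilon_1 \norm{x}_1$ applies; this is exactly why the second moment is stored in a Count-Min (rather than a signed Count) Sketch, and the argument would fail for a signed quantity such as the first moment. The linearity identity itself is routine once Lemma~\ref{lem:A3} is available, and the final algebra is immediate. The only bookkeeping subtlety is that the per-coordinate bound must hold simultaneously over all $d$ coordinates and all $T$ iterations; this is secured by taking the depth $\Theta(\log(dT/\delta))$ so that a union bound preserves the overall failure probability $\delta$, which is precisely what the telescoping argument in the main theorem later relies upon.
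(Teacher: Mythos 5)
Your proposal takes a genuinely different route from the paper, and the route has a gap at its central step. The paper's own proof never introduces a sketch estimate of $g_t^2$: it applies the count-min overestimate bound directly to the sketched vector $v_t$, i.e.\ $\hat{v}_{t,i} \le v_{t,i} + \epsilon_1 \norm{v_t}_1$, expands $v_{t,i} = \beta_2 v_{t-1,i} + (1-\beta_2)g^2_{t,i}$ via the recursion, splits $\norm{v_t}_1$ by the triangle inequality into $\beta_2\norm{v_{t-1}}_1 + (1-\beta_2)\norm{g^2_t}_1$, and then cancels the $\beta_2$ terms against $\beta_2\hat{v}_{t-1,i}$. You instead try to reduce the whole lemma to a count-min bound on the increment vector $g_t^2$ through the claimed identity $\hat{v}_{t,i} - \beta_2\hat{v}_{t-1,i} = (1-\beta_2)\hat{g}^2_{t,i}$.

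That reduction fails in two ways. First, the identity is not ``supplied by Lemma~\ref{lem:A3}'': Lemma~\ref{lem:A3} is a purely algebraic statement about $\abs{g_{t,i}}\cdot\abs{\frac{1}{\sqrt{\hat{v}_{t,i}}+\epsilon} - \frac{1}{\sqrt{\beta_2\hat{v}_{t-1,i}}+\epsilon}}$ and says nothing about $g_t^2$ or about linearity of the sketch. The identity you want does appear in the proof of Theorem~\ref{eq:thm}, but there it is asserted without justification, so you cannot lean on it as a given; it is part of what must be proved. Second, it cannot be proved as an exact identity: the count-min sketch is linear in its \emph{cells}, but the query is a minimum over rows, and a minimum is not linear. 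Writing the cell contents as $S_t[j,c] = \beta_2 S_{t-1}[j,c] + (1-\beta_2)G_t[j,c]$, where $G_t$ denotes the (conceptual) sketch of $g_t^2$, superadditivity of the min gives only $\hat{v}_{t,i} - \beta_2\hat{v}_{t-1,i} \ge (1-\beta_2)\hat{g}^2_{t,i}$ --- the wrong direction to chain with the overestimate $\hat{g}^2_{t,i} \le g^2_{t,i} + \epsilon_1\norm{g^2_t}_1$. To get an upper bound you must evaluate a single row, say the row $j^*$ attaining $\hat{v}_{t-1,i}$, which yields $\hat{v}_{t,i} - \beta_2\hat{v}_{t-1,i} \le (1-\beta_2)\,G_t[j^*, h_{j^*}(i)]$; but the collision mass of $g_t^2$ in one (data-dependent) row of width $\Theta(1/\epsilon_1)$ is controlled at level $\epsilon_1\norm{g^2_t}_1$ only with constant probability by Markov's inequality, not with the probability $1-\delta/(dT)$ that your union bound over coordinates and iterations requires; the min over all rows is exactly what buys the high-probability guarantee, and it is what your reduction gives up. The paper's proof sidesteps this because the full min-over-rows guarantee is applied to $v_t$ itself (its final cancellation step has a direction subtlety of its own, but it needs no statement about $\hat{g}^2_{t,i}$). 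Your closing remarks --- non-negativity licensing the one-sided count-min bound, and depth $\Theta(\log(dT/\delta))$ absorbing the union bound --- are correct and consistent with the paper, but they do not repair this step.
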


\begin{proof}
Given the error bound for the count-min sketch and the Adam update rule, we have the following:
\begin{align*}
    \hat{v}_{t,i} &\le v_{t,i} + \epsilon_1 \norm{v_t}_1 \\
    &= v_{t,i} + \epsilon_1 \sum^d_{i=1} \abs{v_t,i} \\
    &= \beta_2 v_{t,i} + (1-\beta_2) g^2_{t,i} + \epsilon_1 \sum^d_{i=1} \abs{\beta_2 v_{t,i} + (1-\beta_2) g^2_{t,i}} \\
    &\leq \beta_2 v_{t,i} + (1-\beta_2) g^2_{t,i} + \epsilon_1 \left(\sum^d_{i=1} \abs{\beta_2 v_{t,i}} + \sum^d_{i=1} \abs{(1-\beta_2) g^2_{t,i}}\right) \\
    &\leq \beta_2 v_{t,i} + (1-\beta_2) g^2_{t,i} + \epsilon_1 \beta_2 \norm{v_{t-1}}_1 + \epsilon_1 (1-\beta_2)\norm{g_t}_1
\end{align*}
By subtracting $\hat{v}_{t,i} - \beta_2 \hat{v}_{t,i}$ and simplifying, we derive the desired inequality.
\begin{align*}
    \hat{v}_{t,i} - \beta_2 \hat{v}_{t,i} &\le \beta_2 v_{t,i} + (1-\beta_2) g^2_{t,i} + \epsilon_1 \beta_2 \norm{v_{t-1}}_1 + \epsilon_1 (1-\beta_2)\norm{g_t}_1 - \beta_2 v_{t,i} - \beta_2 \epsilon_1 \norm{v_{t-1}}_1 \\
    &= (1-\beta_2) g^2_{t,i} + \epsilon_1 (1-\beta_2)\norm{g_t}_1 \\
    &= (1-\beta_2) (g^2_{t,i} + \norm{g_t}_1)
\end{align*}
\end{proof}

\begin{lemma}{\label{lem:A3}}
For all $i \in [d]$ and for the iterates $x_t$ where $t \in [T]$ for Count-Min Sketch Adam, the following equality holds:
\begin{align*}
\abs{g_{t,i}} \cdot \Bigg| \frac{1}{\sqrt{\hat{v}_{t,i}} + \epsilon} - \frac{1}{\sqrt{\beta_2 \hat{v}_{t-1,i}} + \epsilon} \Bigg|
= \frac{\abs{g_{t,i}}}{(\sqrt{\hat{v}_{t,i}} + \epsilon) (\sqrt{\beta_2 \hat{v}_{t-1,i}} + \epsilon)} \cdot
\Bigg| \frac{\hat{v}_{t,i} - \beta_2 \hat{v}_{t-1,i}}{\sqrt{\hat{v}_{t,i}} + \sqrt{\beta_2 \hat{v}_{t-1,i}}} \Bigg|
\end{align*}
\end{lemma}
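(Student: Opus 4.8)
The plan is to treat this purely as an algebraic identity, since nothing about the optimizer dynamics is actually needed---only that $\hat{v}_{t,i}$, $\hat{v}_{t-1,i}$, $\beta_2$, and $\epsilon$ are nonnegative, which holds because the Count-Min Sketch estimates are nonnegative and $\beta_2,\epsilon \ge 0$. First I would place the two reciprocals over a common denominator, writing
\[
\frac{1}{\sqrt{\hat{v}_{t,i}} + \epsilon} - \frac{1}{\sqrt{\beta_2 \hat{v}_{t-1,i}} + \epsilon}
= \frac{\sqrt{\beta_2 \hat{v}_{t-1,i}} - \sqrt{\hat{v}_{t,i}}}{(\sqrt{\hat{v}_{t,i}} + \epsilon)(\sqrt{\beta_2 \hat{v}_{t-1,i}} + \epsilon)},
\]
then multiply by $\abs{g_{t,i}}$ and take absolute values. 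Because each factor in the denominator is at least $\epsilon > 0$, the absolute value distributes over the positive denominator cleanly and leaves $\abs{\sqrt{\hat{v}_{t,i}} - \sqrt{\beta_2 \hat{v}_{t-1,i}}}$ in the numerator.

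The only remaining step is to rationalize this difference of square roots. I would multiply and divide by the conjugate $\sqrt{\hat{v}_{t,i}} + \sqrt{\beta_2 \hat{v}_{t-1,i}}$, using the elementary identity $\sqrt{p} - \sqrt{q} = (p - q)/(\sqrt{p} + \sqrt{q})$ with $p = \hat{v}_{t,i}$ and $q = \beta_2 \hat{v}_{t-1,i}$. This converts the numerator into $\hat{v}_{t,i} - \beta_2 \hat{v}_{t-1,i}$ divided by $\sqrt{\hat{v}_{t,i}} + \sqrt{\beta_2 \hat{v}_{t-1,i}}$, which is precisely the bracketed fraction on the right-hand side.

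There is no genuine obstacle here: the statement is a bookkeeping identity, and its sole subtlety is justifying that the absolute value passes through the strictly positive denominator so that the final expression matches the claimed form term for term. I would close by noting that this identity is exactly the first equality invoked in the bound on $T_1$ in the main proof, where it subsequently feeds into the Count-Min Sketch estimate of Lemma~\ref{lem:A2}.
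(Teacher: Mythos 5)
Your proof is correct and takes essentially the same route as the paper's: put the two reciprocals over a common denominator, pass the absolute value through the strictly positive denominator, and rationalize the difference of square roots via the conjugate $\sqrt{\hat{v}_{t,i}} + \sqrt{\beta_2 \hat{v}_{t-1,i}}$. Your write-up is in fact cleaner than the paper's, whose displayed computation contains a sign typo --- it writes $A^2 + B^2$ where the conjugate product $(A+B)(A-B)$ yields $A^2 - B^2 = \hat{v}_{t,i} - \beta_2 \hat{v}_{t-1,i}$ --- which your identity $\sqrt{p} - \sqrt{q} = (p-q)/(\sqrt{p} + \sqrt{q})$ gets right and which is what the statement of the lemma actually requires.
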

\begin{proof}
Let $x = g_{t,i}$, $A = \sqrt{\hat{v}_{t,i}}$, and $B = \sqrt{\beta_2 \hat{v}_{t-1,i}}$
\begin{align*}
    \abs{x} \cdot \Bigg| \frac{1}{A + \epsilon} - \frac{1}{B + \epsilon} \Bigg| &= \abs{x} \cdot \Bigg| \frac{B-A}{(A + \epsilon)(B + \epsilon)} \Bigg| \\
    &= \abs{x} \cdot \Bigg| \frac{A-B}{(A + \epsilon)(B + \epsilon)} \Bigg|
    \\
    &= \frac{\abs{x} (A+B)}{(A+B)} \cdot \Bigg| \frac{A-B}{(A + \epsilon)(B + \epsilon)} \Bigg|
    \\
    &= \frac{\abs{x} (A+B)(A-B)}{(A+B)(A + \epsilon)(B + \epsilon)}
    \\
    &= \frac{\abs{x} (A^2 + B^2)}{(A+B)(A + \epsilon)(B + \epsilon)}
    \\
    &= \frac{\abs{x}}{(A + \epsilon)(B + \epsilon)} \cdot \Bigg| \frac{A^2 + B^2}{A + B} \Bigg|
\end{align*}
\end{proof}

\end{document}